\newtheoremstyle{named}{}{}{\itshape}{}{\bfseries}{.}{.5em}{\thmnote{#3}}
\theoremstyle{named}
\newtheorem*{namedtheorem}{Theorem}
\newcommand{\NP}{$\mathcal{NP}$}
\providecommand{\keywords}[1]
{
  \small	
  \textbf{\textit{Keywords---}} #1
}
\newtheorem{theorem}{Theorem}
\title
{Exact methods and lower bounds for the \\Oven Scheduling Problem}
\date{February 2022}%
\author{Marie-Louise Lackner$^{1}$, 
Christoph Mrkvicka$^{2}$,
Nysret Musliu$^{1}$, \\
Daniel Walkiewicz$^{2}$,
Felix Winter$^{1}$\\
\\
\small \texttt{marie-louise.lackner@tuwien.ac.at}\\
\small \texttt{christoph.mrkvicka@mcp-alfa.com}\\
\small \texttt{nysret.musliu@tuwien.ac.at}\\
\small \texttt{daniel.walkiewicz@mcp-alfa.com}\\
\small \texttt{felix.winter@tuwien.ac.at}\\
\\
 $^{1}$\small\emph{Christian Doppler Laboratory for Artificial Intelligence and Optimization} \\
 \small\emph{for Planning and Scheduling}, DBAI, TU Wien, Vienna,  Austria \\
 $^{2}$\small \emph{MCP GmbH}, Vienna, Austria \\
}
\begin{document}
\maketitle

\abstract{The Oven Scheduling Problem (OSP) is a new parallel batch scheduling problem that arises in the area of electronic component manufacturing. Jobs need to be scheduled to one of several ovens and may be processed simultaneously in one batch if they have compatible requirements.
The scheduling of jobs must respect several constraints concerning eligibility and availability of ovens, release dates of jobs, setup times between batches as well as oven capacities.
Running the ovens is highly energy-intensive and thus the main objective, besides finishing jobs on time, is to minimize the cumulative batch processing time across all ovens.
This objective distinguishes the OSP from other batch processing problems which typically minimize objectives related to makespan, tardiness or lateness.

We propose to solve this NP-hard scheduling problem via constraint programming (CP) and integer linear programming (ILP) and
present corresponding models. 
For an experimental evaluation,
we introduce a multi-parameter random instance generator to provide a diverse set of problem instances. 
Using state-of-the-art solvers, we evaluate  
the quality and compare the performance of our CP- and ILP-models.
We show that our models can find feasible solutions for instances of realistic size, many of those being provably optimal or nearly optimal solutions. 
Finally, we derive theoretical lower bounds on the solution cost of feasible solutions to the OSP; these can be computed within a few seconds. We show that these lower bounds are competitive with those derived by state-of-the-art solvers.
}

\hspace{10pt}

\keywords{Oven Scheduling Problem, Parallel Batch Processing, Constraint Programming, Integer Linear Programming}

\maketitle

\section{Introduction}
\label{sec:intro}
In the electronics industry, many components need to undergo a hardening process which is performed in specialised heat treatment ovens.
As running these ovens is a highly energy-intensive task, it is advantageous to group multiple jobs that produce compatible components into batches for simultaneous processing. 
However, creating an efficient oven schedule is a complex task as several cost objectives related to oven processing time, job tardiness and setup costs  need to be minimized.
Furthermore, a multitude of constraints that impose restrictions on the availability, capacity, and eligibility of ovens have to be considered.
Due to the inherent complexity of the problem and the large number of jobs that usually have to be batched in real-life scheduling scenarios, efficient automated solution methods are thus needed to find optimized schedules.

Over the last three decades, a wealth of scientific papers 
investigated batch scheduling problems.
Several early problem variants using single machine and parallel machine settings were categorized and shown to be NP-hard~\cite{potts_scheduling_2000}; a more recent literature review has been provided by Mathirajan et al.~\cite{mathirajan2006literature}.
Batch scheduling problems share the common goal that jobs are processed simultaneously in batches in order to increase efficiency. 
Besides this common goal, a variety of different problems with unique constraints and solution objectives arise from different applications in the 
chemical, aeronautical, electronic and steel-producing industry where batch processing machines can appear in the form of autoclaves~\cite{malapert2012constraint}, ovens~\cite{lee1992efficient} or kilns~\cite{zhao_decomposition_2020}.

For example, a just-in-time batch scheduling problem that aims to minimize tardiness and earliness objectives has been recently investigated in~\cite{polyakovskiy_just_time_2020}.
Another recent study~\cite{zhao_decomposition_2020} introduced a batch scheduling problem from the steel industry that includes setup times, release times, as well as due date constraints.
Furthermore, a complex two-phase batch scheduling problem from the composites manufacturing industry has been solved with the use of CP and hybrid techniques \cite{tang_cp_2020}.

Exact methods used for finding optimal schedules on batch processing machines involve dynamic programming~\cite{brucker1998scheduling} for the simplest variants as well as CP- and mixed integer programming (MIP) models. 
CP-models have e.g. been proposed by Malapert et al.~\cite{malapert2012constraint} and by Kosch et al.~\cite{kosch2014new}, where both publications consider batch scheduling on a single machine with non-identical job sizes and due dates but without release dates.
A novel arc-flow based CP-model for minimizing makespan on parallel batch processing machines was recently proposed~\cite{trindade2020arc}.
Branch-and-Bound~\cite{azizoglu2001scheduling} and Branch-and-Price~\cite{parsa2010branch} methods have been investigated as well. 
As the majority of batch scheduling problems are \NP-hard, exact methods are often not capable of solving large instances within a reasonable time-limit and thus (meta-)heuristic techniques are designed in addition. These range from GRASP approaches~\cite{damodaran2011grasp} and variable neighbourhood search~\cite{cakici2013batch}, over genetic algorithms~\cite{malve2007genetic, costa2014novel}, ant colony optimization~\cite{cheng2013improved} and particle swarm optimization~\cite{zhou2018modified} to simulated annealing~\cite{damodaran2012simulated}.

In this paper, we introduce the Oven Scheduling Problem (OSP), which is a new real-life batch scheduling problem from the area of electronic component manufacturing.
The OSP defines a unique combination of cumulative batch processing time, tardiness and setup cost objectives that need to be minimized.
To the best of our knowledge, this objective has not been studied previously in batch scheduling problems.
Furthermore, we take special requirements of the electronic component manufacturing industry into account.
Thus, the problem 
considers specialized constraints concerning the availability of ovens as well as constraints regarding oven capacity, oven eligibility and job compatibility. 

The main contributions of this paper are: 
\begin{itemize}
    \item We introduce and formally specify a new real-life batch scheduling problem. 
    \item Based on two different modelling approaches, we propose solver independent CP- and ILP-models that can be utilized with state-of-the-art solver technology to provide an exact solution approach. In addition, we provide two OPL-models for CP Optimizer using interval variables.
    \item To generate a large instance set, we introduce an innovative multi-parameter random instance generation procedure.
    \item We provide a construction heuristic that can be used to quickly obtain feasible solutions.
    \item We develop theoretical lower bounds on the number of batches required in a feasible solution as well as lower bounds on the value of the objective function.
    \item All our solution methods are extensively evaluated through a series of benchmark experiments, including the evaluation of several search strategies and a warm-start approach. For a sample of 80 benchmark instances, we obtain optimal results for 38 instances, and provide upper and lower bounds on the objective for all instances.
\end{itemize}

The current paper is a significant extension of our CP 2021 conference paper~\cite{lackner_et_al:LIPIcs.CP.2021.37}. One major addition to our conference paper is an entirely new modeling approach based on representative jobs for batches, which produces the best results on our benchmark set  (see Sections~\ref{sec:model-repr-job} 
and \ref{sec:cp-repr}).
Another addition is the derivation of refined lower bounds on the solution cost for a given instance of the OSP (in Section~\ref{sec:lower-bounds}).

The rest of this paper is organized as follows:
We first provide a description of the OSP (Section~\ref{sec:problem-description}) before we formally define the problem and formulate a CP model (Section~\ref{sec:cp-model}). Then we present an ILP model (Section~\ref{sec:mip-model}) and a second CP modelling approach using representative jobs (Section~\ref{sec:model-repr-job}). Alternative model implementations as well as search strategies are described in Section~\ref{sec:alternative-implementations+search}.
Theoretical lower bounds are derived in Section~\ref{sec:lower-bounds}.
In Section~\ref{sec:random+heuristic}, we describe the random instance generator and the construction heuristic.
Finally, we present and discuss experimental results (Section~\ref{sec:experiments}).

\section{Description of the OSP}
\label{sec:problem-description}

The OSP consists in creating a feasible assignment of jobs to batches and in finding an optimal schedule of these batches on a set of ovens, which we refer to as machines in the remainder of the paper.

Jobs that are assigned to the same batch need to have the same \emph{attribute};\footnote{In the literature, the concept of attribute compatibility is often treated under the term of \emph{incompatible job families}, see, e.g.~\cite{cakici2013batch}.} %
in the context of heat treatment this can be thought of as the temperature at which components need to be processed.
Moreover, a batch cannot start before the \emph{release date} of any job assigned to this batch.
The \emph{batch processing time} may not be shorter than the minimal processing time of any assigned job and must not be longer then any job's maximal processing time, as this could damage the produced components.
Every job can only be assigned to a set of \emph{eligible machines} and machines are further only available during machine-dependent \emph{availability intervals}.
Moreover, machines have a maximal \emph{capacity}, which may not be exceeded by the cumulative size of jobs in a single batch.
\emph{Setup times} between consecutive batches must also be taken into account. Setup times depend on the ordered pair of attributes of the jobs in the respective batches and are independent of the machine assignments.
In the context of heat treatment, this can be thought of as the time required to switch from one temperature to another.
Moreover, setup times before the first batch on every machine need to be taken into account. Setup times before first batches depend on the \emph{initial state} of machines.\footnote{Note that initial states of machines were not yet present in the original formulation of the OSP  in the conference paper~\cite{lackner_et_al:LIPIcs.CP.2021.37}.}

The objective of the OSP is to minimize the cumulative batch processing time, total setup costs, as well as the number of tardy jobs.
These three objective components are combined in a single objective function using a linear combination, as is formalized in Section~\ref{sec:cp-model:obj}.
As the minimization of job tardiness usually has the highest priority in practice, the tardiness objective has a higher weight than the other objectives.

In practice, the cumulative batch processing time should be minimized as the cost of running an oven depends merely on the processing time of the entire batch and not on the number of jobs within a batch. Therefore, running an oven containing a single small job incurs the same costs as running the oven filled to its maximal capacity.

Furthermore, we note that setup costs and setup times are not necessarily correlated. In fact, cooling down an oven from a high to a low temperature might not incur any (energy) costs, but still might require a certain amount of time. Setup costs are used to capture all costs that are related to the setup required between batches; e.g. costs related to personnel involved in the setup operation can also be captured by setup costs. Therefore, setup times are not included in the objective function, only setup costs are.\footnote{This is another difference to the original formulation of the OSP in the conference paper~\cite{lackner_et_al:LIPIcs.CP.2021.37} where setup times were part of the objective function. Since setup costs can be adapted in a preprocessing step in order to include any costs related to the setup times, we decided to remove setup times from the objective in favor of a simpler model formulation.} However, up to a certain extent, setup times are implicitly minimized since they can have an impact on the tardiness of jobs.

Using the three-field notation introduced by Graham et al.\ \cite{graham1979optimization}, the OSP can be classified as $\Tilde{P}\vert r_j, \Bar{d_j}, maxt_j, b_i, ST_{sd,b}, SC_{sd,b}, \mathcal{E}_j, Av_m \vert \text{obj}$, where $\mathcal{E}_j$ stands for eligible machines and $Av_m$ for availability of machines. 
A more formal description of the problem constraints and the objective function obj is given in Section~\ref{sec:cp-model}.


As shown by Uzsoy~\cite{uzsoy1994scheduling}, minimizing makespan on a single batch processing machine is an \NP-hard problem. This can be seen as follows: For the special case that all jobs have identical processing times, minimizing makespan on a single batch processing machine is equivalent
to a bin packing problem where the bin capacity is equal to the machine capacity and item sizes are given by the job sizes.
It follows that the OSP is \NP-hard as well, since it generalizes this problem.
Indeed, minimizing makespan on a single batch processing machine is equivalent to minimizing batch processing times in an oven scheduling problem with a single machine, a single attribute, no setup times, and for which all jobs are available at the start of the scheduling horizon.

\subsection{Instance parameters of the OSP}
\label{sec:input}


An instance of the OSP consists of a set $\mathcal{M}=\left\{1, \ldots, k\right\}$ of machines, a set $\mathcal{J}=\left\{1, \ldots, n\right\}$ of jobs and a set $\mathcal{A} = \left\{1, \ldots, a\right\}$ of attributes as well as the length $l \in \mathbb{N}$ of the scheduling horizon.
Every machine $m \in \mathcal{M}$ has a maximum capacity $c_m$ and an initial state $s_m \in \mathcal{A}$.
The machine availability times are specified in the form of time intervals $[as(m,i), ae(m,i)]\subseteq[0,l]$ where $as(m,i)$ denotes the start and $ae(m,i)$ the end time of the $i$-th interval on machine $m$. 
W.l.o.g. we assume that every machine has the same number of availability intervals and denote this number by $I$ where some of these intervals might be empty (i.e. $as(m,i) =  ae(m,i)$).
Moreover, availability intervals have to be sorted in increasing order (i.e. $as(m,i) \leq  ae(m,i) \leq as(m,i+1)]$ for all $i \leq I-1$).

Every job $j \in \mathcal{J}$ is specified by the following list of properties:
\begin{itemize}
        \item A set of eligible machines
          \( \mathcal{E}_j \subseteq \mathcal{M}\). 
        \item An earliest start time (or release time) $et_j \in \mathbb{N}$ with \( 0 \leq et_j < l \). 
        \item A latest end time (or due date) $lt_j \in \mathbb{N}$ with
            \( et_j < lt_j \leq l\). 
        \item A minimal processing time $mint_j \in \mathbb{N}$ with $min_T \leq mint_j \leq max_T$, where $min_T > 0$ is the overall minimum and $max_T \leq l$ is the overall maximum processing time. 
         \item A maximal processing time $maxt_j \in \mathbb{N}$ with $mint_j \leq maxt_j \leq max_T$. 
        \item A size $s_j \in \mathbb{N}$.
        \item An attribute $a_j \in \mathcal{A}$.
\end{itemize}

Moreover, an $(a \times a)$-matrix of setup times $st=(st(a_i, a_j))_{1 \leq a_i, a_j \leq a}$ and an $(a \times a)$-matrix of setup costs $sc=(sc(a_i, a_j))_{1 \leq a_i, a_j \leq a}$ are given to denote the setup times (resp. costs) incurred between a batch with attribute $a_i$ and a subsequent batch with attribute $a_j$. 
Setup times (resp. costs) are integers in the range $[0,max_{ST}]$ (resp. $[0,max_{SC}]$), where $max_{ST} \leq l$ (resp. $max_{SC} \in \mathbb{N}$)  denotes the maximal setup time (resp. maximal setup cost). Note that these matrices are not necessarily symmetric. 

\subsection{Example instance with six jobs}
\label{sec:example}

Consider the following example for an OSP instance consisting of six jobs, two attributes, two machines and a scheduling horizon of length $l=15$. The instance parameters are summarized in the following tables and matrices:

\begin{multicols}{2}
\begin{center}
{
\begin{tabular}{p{2.8cm}|p{1.2cm}p{1.2cm}}
Machine & $M_1$ & $M_2$   \\\hline
$c_m$ & 100 & 150 \\\hline
$s_m$ & 1 & 2 \\\hline
Availability \mbox{intervals} $[as, ae]$ & [0,6] [8,14] & [2,10] [11, 14]
\end{tabular}
}
\end{center}
\columnbreak
\begin{align*}
st & = \begin{pmatrix}
1 & 2 \\
3 & 1
\end{pmatrix}  \\
& \\
sc & = \begin{pmatrix}
0 & 20  \\
10 & 0
\end{pmatrix}
\end{align*}
\end{multicols}

\begin{center}
\begin{tabular}{l|cccccc}
Job $j$ & 1 & 2 & 3 & 4 & 5 & 6  \\\hline
$\mathcal{E}_j$ & $M_1$ & $M_1$ & $M_1$ & $M_1$& &\\
& & $M_2$ & & $M_2$ & $M_2$& $M_2$ \\\hline
$et_j$ & 2 & 0& 0& 3& 0& 2\\
$lt_j$  &  10 & 10 & 20 & 20 & 20 & \\
$mint_j$   &  3& 3& 3& 5& 5& 5\\
$maxt_j$    &  3& 5& 5& 8& 8& 10\\
$s_j$     &  40& 60& 30& 50& 50& 50\\
$a_j$      &  2& 2 &1 & 1& 1& 1\\
\end{tabular}
\end{center}

An optimal\footnote{This solution is optimal with respect to weights $w_t =4$, $w_{sc} = 1$ and $w_t= 100$ as defined in Section~\ref{sec:cp-model:obj}.} solution of this instance consists of three batches and is visualised in Figure~\ref{fig:example_solution}. In this visualisation, the dark grey areas correspond to time intervals for which the machine is not available, light gray rectangles before batches are setup times and the dashed lines represent the machine capacities.

\begin{figure}
    \centering
    \includegraphics[trim=0 5cm 0 0cm, clip, scale=0.45]{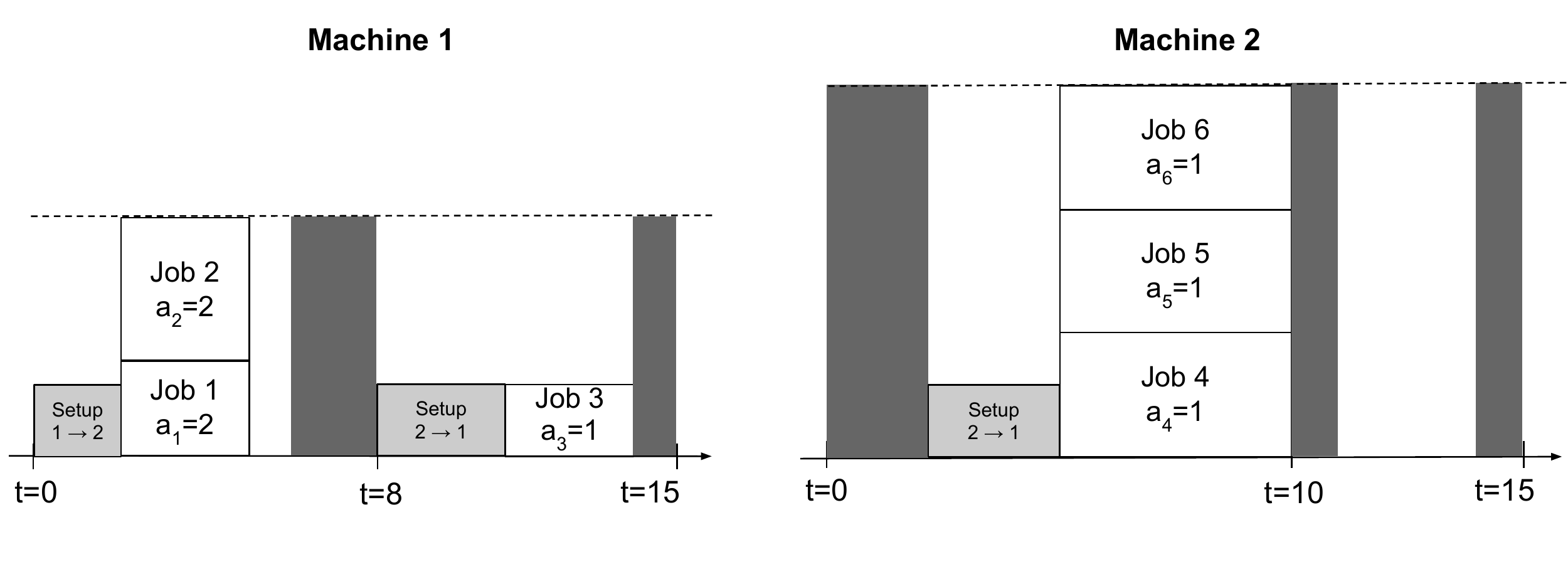}
    \caption{An optimal solution to the OSP for a small example problem.}
    \label{fig:example_solution}
\end{figure}

\section{Formal problem definition and CP-model for the OSP}
\label{sec:cp-model}

In this section we provide a formal definition of the OSP 
that will also serve as CP-model.
We first explain how batches are modeled and afterwards define decision variables, objective function, and the set of constraints.

In the worst case we need as many batches as there are jobs; we thus define the set of potential batches as $\mathcal{B} = \{ B_{1,1}, \ldots, B_{1,n}, \ldots, B_{k,1}, \ldots, B_{k,n} \}$ to model up to $n$ batches for machines $1$ to $k$.
In order to break symmetries in the model, we further enforce that batches are sorted in ascending order of their start times and empty batches are scheduled at the end.
That is, $B_{m,b+1}$ is the batch following immediately after batch $B_{m,b}$ on machine $m$ for $b \leq n-1$.
Clearly, at most $n$ of the $k \cdot n$ potential batches will actually be used and the rest will remain empty.

\subsection{Variables}
We define the following decision variables: 
\begin{itemize}
\item Machine assigned to job:    $ M_{j} \in \mathcal{M} \quad \forall j \in \mathcal{J}$
\item Batch number assigned to job:\footnote{Throughout the paper, we write $[n]$ for the interval of integers $\left\{1, \ldots, n\right\}$.}   $ B_{j} \in [n] \quad \forall j \in \mathcal{J} $
\item Start times of batches:
    $ S_{m,b} \in [0, l] \subset \mathbb{N} \quad \forall m \in \mathcal{M} \quad \forall  b \in [n] $
\item Processing times of batches: 
    $ P_{m,b} \in [0, max_T] \subset \mathbb{N} \quad \forall m \in \mathcal{M} \quad \forall  b \in [n]$
\end{itemize}

Note that $M_{j}$ and $B_{j}$ determine to which batch job $j$ is assigned ($B_{(M_j),(B_j)}$).

\noindent
We additionally define the following auxiliary variables:
\begin{itemize}
\item  Attribute of batch:
    $A_{m,b} \in [a] \quad  \forall m \in \mathcal{M} \quad \forall  b \in [n] $
\item Availability interval for batch: 
    $I_{m,b}  \in [I] \quad  \forall m \in \mathcal{M} \quad \forall  b \in [n]$
\item Number of batches per machine: 
    $    b_m \in [n] \quad  \forall m \in \mathcal{M}$
\end{itemize}

\subsection{Objective function}
\label{sec:cp-model:obj}

The objective function consists of three\footnote{The objective function of original formulation of the OSP in the conference paper~\cite{lackner_et_al:LIPIcs.CP.2021.37} has four components and also includes setup times. However, setup costs can be adapted in a preprocessing step in order to include any costs related to the setup times. We have thus decided to remove setup times from the objective in favor of a simpler model formulation.} components: the cumulative batch processing time across all machines $p$, the number of tardy jobs $t$ and the cumulative setup costs $sc$:

\begin{equation}\label{eqn:obj-components}
\begin{aligned}  
p &= \sum_{m \in \mathcal{M}, 1 \leq b \leq n} P_{m,b} \\
 sc & = \sum_{m \in \mathcal{M}}sc(s_m,A_{m,1}) + \sum_{\substack{m \in \mathcal{M}\\ 1 \leq b \leq b_m-1}} sc(A_{m,b},A_{m,b+1})  \\
t & = \left \vert \left\{ j \in \mathcal{J}
: S_{M_j,B_j} + P_{M_j,B_j} > lt_j\right\} \right \vert 
\end{aligned}
\end{equation}
Note that the cumulative batch processing time is the sum of  processing times of all batches, not of all jobs. Minimizing the cumulative batch processing time thus leads to batches being formed in the most efficient way.
Setup costs are composed of the setup costs before the first batch on every machine, which are determined by the initial state of the machine and the attribute of the first batch, and the setup costs between consecutive batches, which are determined by the attributes of the batches.

\noindent \emph{Normalization of cost components.}
Finding a trade-off among several objectives is the topic of multiobjective optimization \cite{deb2014multi,miettinen2012nonlinear}. In practice, lexicographic optimization is often chosen.
For our specific application, we wanted an approach that allows high flexibility and lets users decide on the importance of each objective. 
We thus decided in consultation with our industrial partner to define the objective function as a linear combination of the three components.
Therefore, the components $p$, $sc$ and $t$ need to be normalized to $\tilde{p}$,  $\tilde{sc}$ and $\tilde{t} \in [0,1]$:
\begin{equation}\label{eqn:obj-components-normalized}
\begin{aligned}  
\tilde{p} & = \frac{p}{ avg_t \cdot n} \text{ with } avg_t
 = \left\lceil \frac{\sum_{j \in \mathcal{J}} mint_j}{n} \right \rceil
\\
  \tilde{sc} & = \frac{sc}{\max(max_{SC},1) \cdot n} \\
  \tilde{t} & = \frac{t}{n} 
\end{aligned}
\end{equation}

In the worst case, every batch processes a single job.
In this case the total batch processing time is $n$ times the average job processing time $avg_t$.
The setup  costs are bounded by the maximum setup  cost multiplied with the number of jobs.
We take the maximum of 1  $max_{SC}$ since it is possible that $max_{SC}=0$.
The number of tardy jobs is clearly bounded by the total number of jobs.

Finally, the objective function $\text{obj}$ is a linear combination of the three normalized components:
\begin{align}\label{eqn:obj-function}
\text{obj}=(w_p\cdot \tilde{p} +  w_{sc} \cdot \tilde{sc} + w_t \cdot \tilde{t})/(w_p + w_{sc} + w_t) \quad \in [0,1] \subset \mathbb{R}
\end{align}
where the weights $w_p$, $w_{sc}$ and $w_t$ take integer values. 
Together with our industrial partner, we chose the default values to be $w_p=4$, $w_{sc}=1$ and $w_t=100$, which captures the requirements of typical practical scheduling applications.
The high weight given to the minimization of the number of tardy jobs reflects that t is a soft constraint: Jobs are allowed to finish late, this is however highly undesirable. Among the other two components, minimizing the cumulative batch processing time has a greater priority than minimizing setup costs.

\noindent\emph{Integer-valued objective.}
As some state-of-the-art CP solvers can only handle integer domains, we propose an alternative objective function $\text{obj}'$, where we additionally multiply $\tilde{p}$, $\tilde{sc}$, and $\tilde{t}$ by the number of jobs and the least common multiple of $avg_t$ and $max_{SC}$:
\begin{align}\label{eqn:obj-function-int}
\text{obj}' & = C \cdot n \cdot (w_p + w_{sc} + w_t) \cdot \text{obj} \quad \in 
\mathbb{N} \notag \\
& =  \frac{w_p\cdot C}{avg_t}\cdot p  
+ \frac{w_{sc} \cdot C}{\max(max_{SC},1)} \cdot sc   
+ w_t \cdot C \cdot t, \\
\text{where } C  & = \texttt{lcm}(avg_t, \max(max_{SC},1)). \notag
\end{align}
Preliminary experiments using the MIP solver Gurobi showed that using $\text{obj}$ and $\text{obj}'$ both lead to similar results.
We therefore used only $\text{obj}'$ in our final experimental evaluation. 


\subsection{Constraints}
\label{sec:cp-model:constraints}

In what follows, we formally define the constraints of the OSP using a high-level CP modeling notation. 
Most of these constraints can directly be handled by CP solvers, however we implicitly make use of constraint reification to express conditional sums and additionally use the maximum global constraint.
Furthermore, we implicitly utilize the element constraint to use variables as array indices.

\begin{itemize}
    \item Jobs may not start before their earliest start time:
    $
        S_{M_j, B_j} \geq et_{j} \quad \forall j \in \mathcal{J}.
    $
    \item Batch processing times must lie between the minimal and maximal processing time of all assigned jobs:
    \begin{gather*}
    P_{m,b} = \max(mint_j: j \in \mathcal{J} \text{ with } B_j=b \land M_j=m) \quad 
    \forall m \in \mathcal{M}, b \in [b_m] \\
    P_{M_j,B_j} \leq maxt_j \quad \forall j \in \mathcal{J}
    \end{gather*}
    \item Batches on the same machine may not overlap and setup times must be considered between consecutive batches: 
    \begin{gather*}
        S_{m,b} + P_{m,b}  + st_{(A_{m,b},A_{m,b+1})} \leq S_{m,b+1} \quad \forall b \in [b_m -1],
    \end{gather*}
    \item Batches and the preceding setup times must lie entirely within one machine availability interval. 
    This is modeled with auxiliary variables $I_{m,b}$ which encode in which availability interval batch $B_{m,b}$ lies. The last set of constraints concerns the first batch on every machine:
    \begin{flalign*}
     &I_{m,b}  = \max(i \in [I]: S_{m,b} \geq as(m,i)) \quad &\forall m \in \mathcal{M} \quad \forall b \in [b]\\
     &S_{m,b} + P_{m,b}  \leq  ae(m,I_{m,b}) \quad &\forall m \in \mathcal{M} \quad \forall b \in [b_m]\\
     &as(m,I_{m,b}) \leq S_{m,b} - st(A_{m,b-1},A_{m,b})    \quad &\forall m \in \mathcal{M}, \forall b \in \{2, \ldots, b_m \} \\
     &as(m,I_{m,1}) \leq S_{m,1} - st(s_m,A_{m,1})   \quad &\forall m \in \mathcal{M}.
    \end{flalign*}
    In practice, an interval for which a machine is unavailable can correspond to a period during which a machine is turned off or occupied with some other task, or to a period for which the personnel required for the setup and running of ovens is unavailable. For all described cases, the setup of machines for a given batch cannot be performed during an interval for which the machine is unavailable and must precede the batch immediately.
    Therefore, setup times also need to fall completely within the associated availability interval.
    \item Total batch size must be less than machine capacity:
    \begin{gather*}
      \sum_{j \in \mathcal{J}:M_j=m \wedge B_j=b}s_j  \leq c_m \quad \forall m
      \in M \text{ with } b_m > 0, \forall b \in [b_m]
    \end{gather*}
    \item Jobs in one batch must have the same attribute,
    which we model with auxiliary variables $A_{m,b}$ to set the attribute of a batch:
    \[
    A_{M_j,B_j}=a_j \quad \forall j \in \mathcal{J}.
    \]
    \item The assigned machine must be eligible for a job:
    $
     M_j \in \mathcal{E}_j \quad \forall j \in \mathcal{J}.
    $
\item Set the number of batches per machine variables:
    $
    b_{M_j} \geq B_j \quad \forall j \in \mathcal{J}
    $
\item Set variables for empty batches (i.e. batches $B_{m,b}$ with $b > b_m$):
\begin{equation*}
    \begin{aligned}
        &S_{m,b} =l \quad
        &P_{m,b}  =0 \quad 
        &A_{m,b}  = 1 \quad
        &I_{m,b} = I &\quad \forall m \in \mathcal{M}, b_m < b \leq n
    \end{aligned}
    \end{equation*}
\end{itemize}


\section{ILP-model for the OSP}
\label{sec:mip-model}
We propose an ILP-formulation for the OSP, where batches are modeled similarly as in the CP-model: for $m \in \mathcal{M}, b \in [n]$, $B_{m,b}$ models the batch at position $b$ on machine $m$.
However, we use a different set of decision variables: Binary variables $X_{m,b,j}$ encode whether job $j$ is assigned to batch $B_{m,b}$ ($X_{m,b,j}=1 \Leftrightarrow (B_j = b \land M_j = m)$), and integer variables $S_{m,b}$ and $P_{m,b}$ encode the start and processing times of batches. 
Moreover, we use auxiliary variables  $st_{m,b}$ and $sc_{m,b}$ for setup times and costs between batch $b$ on machine $m$ and the following batch.

To handle empty batches, we define an additional attribute with value 0 and extend the matrices of setup times $\bar{st}$ and setup costs $\bar{sc}$ so that no costs occur when transitioning from an arbitrary batch to an empty batch: $\bar{st}(a_i,a_j)=\bar{sc}(a_i,a_j)=0$ if $a_i=0$ or $a_j$=0.
Moreover, we add a machine availability interval $[l,l]$ of length 0 to the list of availability intervals so that empty batches can be scheduled for this interval (the maximum number of intervals per machine therefore becomes $I+1$).
We then model the problem as follows:\footnote{If not stated otherwise, $\forall m$ is short for $\forall m \in \mathcal{M}$, $\forall b$ for $\forall b \in \left[1,n\right]$, $\forall i$  for $\forall i \in [1,I+1]$ and $\forall j$ for $\forall j \in \mathcal{J}$.}

\begin{align}
\text{Min.} \quad & \text{obj'}=\tilde{w_p}\cdot p +  \tilde{w_{sc}} \cdot sc + \tilde{w_t} \cdot t, \text{ where} &\label{eqn:mip-obj}\\ 
 &p = \sum_{\substack{m \in \mathcal{M}\\ 1 \leq b \leq n}} P_{m,b}, 
\qquad \qquad  t = \sum_{\substack{j \in \mathcal{J}, m \in \mathcal{M}\\ 1 \leq b \leq n}} T_{m,b,j},& \notag \\ 
&  sc = \sum_{\substack{m \in \mathcal{M}\\ 1 \leq b \leq n-1}} \bar{sc}_{m,b}, \notag \\ 
\text{s.t.} \, & \sum\nolimits_{m \in \mathcal{M}, 1 \leq b \leq n} 
X_{m,b,j}=1 & \forall j \label{eqn:mip_exactly_one_batch}\\
& \sum\nolimits_{m \in \mathcal{E}_j, 1 \leq b \leq n} X_{m,b,j}=1 & \forall j  \label{eqn:mip_eligible_machine}\\
& S_{m,b} \geq et_j \cdot X_{m,b,j} & \forall m, \forall b, \forall j  \label{eqn:mip_start_not_too_early} \\
& mint_j \cdot X_{m,b,j} \leq P_{m,b} \quad \wedge & \notag \\
& P_{m,b} \leq maxt_j \cdot X_{m,b,j} + max_T\cdot(1-X_{m,b,j})  & \forall m, \forall b, \forall j \label{eqn:mip_processing time}\\
& S_{m,b+1} \geq S_{m,b} + P_{m,b} + st_{m,b} & \forall m, \forall b \leq n-1 \label{eqn:mip_sorted_starting_times} \\
& \sum\nolimits_{j \in \mathcal{J}} s_j \cdot X_{m,b,j} \leq c_m & \forall m, \forall b \label{eqn:mip_macx_cap} \\
& a_j \cdot X_{m,b,j} \leq A_{m,b} \quad \wedge & \notag \\
& A_{m,b} \leq a_j \cdot X_{m,b,j} + a\cdot(1-X_{m,b,j})  & \forall m, \forall b, \forall j \label{eqn:mip_matching_attributes}\\
& as(m,i) \cdot I_{m,b,i} \leq S_{m,b} \quad \wedge & \notag \\
&S_{m,b} \leq ae(m,i) \cdot I_{m,b,i} + l \cdot (1-I_{m,b,i})& \forall m, \forall b, \forall i \label{eqn:mip_avail_interval}\\
&\sum\nolimits_{1\leq i \leq I+1}I_{m,b,i} = 1 & \forall m, \forall b \label{eqn:mip_exactly_one_avail_interval}\\
& as(m,i) \cdot I_{m,b,i} \leq S_{m,b} - st_{m,b-1} & \forall m, \forall b \geq 2, \forall i & \label{eqn:mip_batch_fits_avail_interval1}\\
& as(m,i) \cdot I_{m,1,i} \leq S_{m,1} - st(s_m, A_{m,1}) & \forall m,  \forall i & \label{eqn:mip_first_batch_fits_avail_interval1}\\
\textcolor{white}{\text{s.t.}}&S_{m,b} + P_{m,b} \leq ae(m,i) \cdot I_{m,b,i} + l \cdot (1-I_{m,b,i}) &\forall m, \forall b, \forall i\label{eqn:mip_batch_fits_avail_interval2}\\
& T_{m,b,j} \leq X_{m,b,j} & \forall j, \forall m, \forall b \label{eqn:mip_latejob1}\\
& S_{m,b} + P_{m,b}\leq (X_{m,b,j}-T_{m,b,j})\cdot(lt_j-l) + l & \forall j, \forall m, \forall b \label{eqn:mip_latejob2}\\
& S_{m,b} + P_{m,b} + (1-T_{m,b,j})\cdot(l+1) > lt_j 
& \forall j, \forall m, \forall b \label{eqn:mip_latejob3}\\
& \sum\nolimits_{j \in \mathcal{J}} X_{m,b,j} \geq  1 - E_{m,b} & \forall m, \forall b \label{eqn:mip_empty_batch1}\\
& X_{m,b,j} \leq 1 - E_{m,b} & \forall m, \forall b, \forall j\label{eqn:mip_empty_batch2} \\
& S_{m,b} \geq l \cdot E_{m,b} & \forall m, \forall b \label{eqn:mip_empty_batch_start} \\
& P_{m,b} <= max_T \cdot (1-E_{m,b}) & \forall m, \forall b \label{eqn:mip_empty_batch_duration} \\
& E_{m,b} <= I_{m,b,I+1} & \forall m, \forall b \label{eqn:mip_empty_batch_interval} \\
& A_{m,b} <= a \cdot (1-E_{m,b}) & \forall m, \forall b \label{eqn:mip_empty_batch_attribute} \\
& E_{m,b} <= E_{m,b+1}& \forall m, \forall b \leq n-1 \label{eqn:mip_empty_batch_grouped} \\
& st_{m,b} = \bar{st}(A_{m,b},A_{m,b+1})  & \forall m, \forall b \leq n-1 \label{eqn:mip_setuptime_between_batches} \\
& sc_{m,b} = \bar{sc}(A_{m,b},A_{m,b+1}) & \forall m, \forall b \leq n-1 \label{eqn:mip_setupcost_between_batches}\\
& X_{m,b,j} \in \{0,1\}, S_{m,b} \in [0,l], P_{m,b} \in [0,max_T],  & \notag \\
& A_{m,b} \in \left[0,a\right], I_{m,b,i} \in \{0,1\}, E_{m,b} \in \{0,1\}, & \notag \\
& st_{m,b} \in [0,max_{ST}], sc_{m,b} \in [0,max_{SC}]& \forall m, \forall b, \forall j \label{eqn:mip_domains}
\end{align}

The weights of the objective function~\eqref{eqn:mip-obj} are as described in equation~\eqref{eqn:obj-function-int} in Section~\ref{sec:cp-model:obj}. 
Constraint~\eqref{eqn:mip_exactly_one_batch} ensures that every job is assigned to exactly one batch. Moreover,  constraint~\eqref{eqn:mip_eligible_machine} ensures that jobs can only be assigned to eligible machines. 
Constraint~\eqref{eqn:mip_start_not_too_early} specifies that a batch may not start before the earliest start of any job in the batch.
The processing time of a batch is constrained by equations~\eqref{eqn:mip_processing time}.
Constraint~\eqref{eqn:mip_sorted_starting_times} imposes additional restrictions on the starting times and ensures that the correct setup times are considered between consecutive batches.
Constraint~\eqref{eqn:mip_macx_cap} ensures that the machine capacities are not exceeded for any batch.
Constraint \eqref{eqn:mip_matching_attributes} ensures that jobs in the same batch have the same attribute. 
The binary auxiliary variables $I_{m,b,i}$ in constraint~\eqref{eqn:mip_avail_interval} encode whether batch $B_{m,b}$ is scheduled within the $i$-th availability interval $\left[as(m,i), ae(m,i)\right]$ of machine $m$. Therefore, if $I_{m,b,i}=1$, it must hold that $as(m,i) \leq S_{m,b} \leq ae(m,i)$.
The redundant constraint~\eqref{eqn:mip_exactly_one_avail_interval} ensures that every batch is scheduled within exactly one availability interval.
Constraints~\eqref{eqn:mip_batch_fits_avail_interval1}-- \eqref{eqn:mip_batch_fits_avail_interval2} ensure that the entire processing time of batch $B_{m,b}$ as well as the preceding setup times $st_{m,b-1}$ (
see~\eqref{eqn:mip_setuptime_between_batches}) lie within a single availability interval. Constraint~\eqref{eqn:mip_first_batch_fits_avail_interval1} takes care of the special case of the first batch on a machine.

The binary auxiliary variables $T_{j,m,b}$ encode whether job $j$ in batch $B_{m,b}$ finishes after its latest end date and is used to calculate the number of tardy jobs $t$.
Constraint~\eqref{eqn:mip_latejob1} ensures that $T_{j,m,b}=1$ is only possible if job $j$ is assigned to batch $B_{m,b}$. If $T_{j,m,b}=0$, job $j$ must finish before $lt_j$ (Constraint~\eqref{eqn:mip_latejob2}) and if $T_{j,m,b}=1$, it must hold that $S_{m,b} + P_{m,b} > lt_j$ (Constraint~\eqref{eqn:mip_latejob3}).
The binary variables $E_{m,b}$ in equations~\eqref{eqn:mip_empty_batch1} to \eqref{eqn:mip_empty_batch_grouped} encode whether batch $B_{m,b}$ is empty or not. Constraints~\eqref{eqn:mip_empty_batch1} and \eqref{eqn:mip_empty_batch2} ensure that $E_{m,b}=1$ iff no job is scheduled for batch $B_{m,b}$.
The constraints~\eqref{eqn:mip_empty_batch_start} to \eqref{eqn:mip_empty_batch_interval} set the start times, processing times, availability intervals, and attributes for empty batches.
Moreover, in order to break symmetries, the list of batches $(B_{m,b})_{1 \leq b \leq n}$ per machine $m \in \mathcal{M}$ is sorted so that all non-empty batches appear first (constraint~\eqref{eqn:mip_empty_batch_grouped}).
Constraint~\eqref{eqn:mip_setuptime_between_batches} defines the setup times $st_{m,b}$ and \eqref{eqn:mip_setupcost_between_batches} the setup costs $sc_{m,b}$ between consecutive batches on the same machine.
Finally, equation~\eqref{eqn:mip_domains} defines the domains of all decision and helper variables. 

\section{CP model using representative jobs for batches}
\label{sec:model-repr-job}

For the models presented in Sections~\ref{sec:cp-model} and \ref{sec:mip-model}, we decided to create decision variables for start times and processing times of every batch that could possibly be present ($n \cdot k$ batches in total). 
Which batch a job is assigned to was then encoded using additional decision variables.
The scheduling constraints were formulated for the batches.

An alternative approach is to schedule jobs instead of batches.
However, only one \textit{representative job} for every batch is scheduled; all other jobs in the same batch point to their representative job.
In order to break symmetries, the job with lowest index is chosen to be the representative job for its batch.
Whereas the previous approach models the OSP primarily as an assignment problem (jobs are assigned to batches that have a predefined order),
this approach focuses more on the scheduling aspect of the OSP: a subset of representative jobs are chosen for which an optimal schedule needs to be found.

Since CP Optimizer is particularly well suited for scheduling problems as demonstrated by Laborie et al.~\cite{laborie2018ibm}, we formulated this modelling approach based on representative jobs as a CP model for IBM ILOG CPLEX Studio. 
This model is written using the Optimization Programming Language (OPL) \cite{hentenryck2002constraint} and makes use of \emph{interval variables}, as well as other concepts that are specific to CP Optimizer; for an introduction to the used CP Optimizer concepts, see~\cite{IBMsched2017}.
In addition, we modelled this approach using the MiniZinc modelling language in order to be able to evaluate the performance of different solvers. This alternative solver-independent CP formulation is briefly described in Section~\ref{sec:cp-repr}.

We define the following decision variables for the OSP: 
\begin{itemize}

\item Optional interval variables for jobs:    
\[
\text{interval } \text{job}_j  \text{ optional}  \subseteq [et_j,l]  
     \text{ size} \in [mint_j, maxt_j]   
	  \quad \forall j \in \mathcal{J}.
\]
The variable $\text{job}_j$ is only present if job $j$ is the representative job of its batch. Note that $\text{job}_j  \text{ optional}  \subseteq [et_j,l]$ ensures that every representative job starts after its earliest start time $et_j$.

\item Optional interval variables for jobs on machines:
\begin{align*}
\text{interval } \text{jobM}_{j,m}  &\text{ optional}  \subseteq [et_j,l]  
     \text{ size} \in [mint_j, maxt_j]
      \\& \text{ intensity } av_m \quad \forall j \in \mathcal{J} \,
	  \quad \forall m \in \mathcal{M}.
\end{align*}
The variable $\text{jobM}_{j,m}$ is only present if job $j$ is assigned to machine $m$ and is representative for its batch.
The intensity function $av_m$ encodes the machine availability times and is modeled using \emph{intensity step functions}; $av_m(t)=100$ if machine $m$ is available at time $t$ and $av_m(t)=0$ otherwise.

\item Pointers to representative jobs in the same batch:
\[
\text{inBatchWith}_j \in [0,n] \quad \forall j \in \mathcal{J}.
\]
If $\text{inBatchWith}_j=i$ for some job $i \in \mathcal{J}$, job $j$ is scheduled in the same batch as job $i$ and job $i$ is representative for this batch.
If $\text{inBatchWith}_j=0$, job $j$ is representative for its batch.

\item Interval sequence variables for the set of jobs on every machine
\begin{align*}
    \text{sequence } \text{mach}_m &\text{ in }
	\{\text{jobM}_{j,m}: j \in \mathcal{J}\} \\
	&\text{ types } \{a_{j}: j \in \mathcal{J}\} \quad \forall m \in \mathcal{M}.
\end{align*}
Interval sequence variables are used to establish the order of jobs, respectively batches, on machines; the value of $\text{mach}_m$ is a permutation of the interval variables $\text{jobM}_{j,m}$ (absent jobs are ignored).  They are used to formulate scheduling constraints such as \texttt{noOverlap}.
So-called types are used to model the attribute of jobs (the attribute of a batch containing job $j$ is equal to $a_j$).

\item Setup times between batches are also modelled using optional interval variables:
\begin{align*}
\text{interval } \text{setup}_{j,m}  &\text{ optional}  \subseteq [0,l]  
     \text{ size} \in [0, max_{ST}]
      \\& \text{ intensity } av_m \quad \forall j \in \mathcal{J} \,
	  \quad \forall m \in \mathcal{M},
\end{align*}
where $max_{ST}$ denotes the overall maximum setup time.	 
The variable  $\text{setup}_{j,m}$ is used to model the setup operation before $\text{jobM}_{j,m}$.
Setup times need to be modelled in addition to jobs themselves in order to ensure that setup times fall within the same machine availability interval as the following jobs.
\item A binary helper variable is used to model tardy jobs: \\ $\text{tardy}_j \in \{ 0,1\} \quad \forall j \in \mathcal{J}$.
\end{itemize}

For the formulation of the objective function as well as the constraints, we make use of the OPL function \texttt{typeOfPrev}. For a sequence interval variable $s$, an interval variable $i$, and two integers $b$ and $c$, $\texttt{typeOfPrev}(s,i,b,c)$ returns the type (=attribute) of the interval variable that is previous to $i$ in the sequence $s$. When $i$ is present and is the first interval in $s$, the function returns the constant integer value $b$. When $i$ is absent, the function returns the constant integer value $c$.
We then model the OSP  as follows:
\begin{align}
\text{Min.} \quad & \text{obj'}=\tilde{w_p}\cdot p +  \tilde{w_{sc}} \cdot sc + \tilde{w_t} \cdot t, \text{ with} &\label{eqn:opl-obj}\\ 
 &p = \sum_{\substack{m \in \mathcal{M}\\ j \in \mathcal{J}}}
	\texttt{lengthOf}(\text{jobM}_{j,m}), 
\qquad \qquad  t = \sum_{j \in \mathcal{J}} \text{tardy}_j,& \notag \\ 
&  sc = \sum_{\substack{m \in \mathcal{M}\\ j \in \mathcal{J}}} sc_{prev(j,m),a_j}, \notag 
\end{align}
where $prev(j,m) =\texttt{typeOfPrev}(\text{mach}_m, \text{jobM}_{j,m}, s_m, 0)$.
\begin{align}
\text{s.t.} \quad & \texttt{alternative}(\text{job}_j, \{ \text{jobM}_{j,m}: m \in \mathcal{M}\}) & \forall j \label{eqn:olp_exactly_one_machine}\\
& \texttt{alternative}(\text{job}_j, \{ \text{jobM}_{j,m}: m \in \mathcal{E}_m\}) & \forall j \label{eqn:olp_eligible_mach}\\
& (\texttt{presenceOf}(\text{job}_j) \land \text{inBatchWith}_j=0) & \notag \\
& \lor(\neg \texttt{presenceOf}(\text{job}_j) \land \text{inBatchWith}_j>0) & \forall j \label{eqn:olp_repr-or-pointer}\\
& \text{inBatchWith}_j=i 
\Rightarrow 
  \texttt{presenceOf}(\text{job}_i) \land i<j & \forall i,j \in \mathcal{J} \label{eqn:olp_pointer}\\
& \text{inBatchWith}_j=i & \notag \\
&
\Rightarrow
a_i = a_j & \notag \\
&
\phantom{\Rightarrow}
\land mint_j \leq \texttt{lengthOf}(\text{job}_i) \leq maxt_j &  \notag \\
&
\phantom{\Rightarrow}
\land \texttt{startOf}(\text{job}_i) \geq et_j & \notag \\
&\phantom{\Rightarrow}
\land \sum_{m \in \mathcal{E}_j} \texttt{presenceOf}(\text{jobM}_{i,m})=1
  & \forall i,j \in \mathcal{J} \label{eqn:olp_batch_conditions}\\
  &  
    \texttt{presenceOf}(\text{jobM}_{j,m})
  	\cdot (s_j + \sum_{\substack{i \in \mathcal{J} \text{ with}\\\text{inBatchWith}_i=j}} s_i) \leq c_m
    & \forall m, \forall j \label{eqn:opl_mach_cap} \\
& \texttt{noOverlap}(\text{mach}_m, st) & \forall m \label{eqn:opl_no_overlap} \\
&    \texttt{presenceOf}(\text{jobM}_{j,m}) = \texttt{presenceOf}(\text{setup}_{j,m}) & \forall m, \forall j \label{eqn:opl_setup1}\\   
&  \texttt{endAtStart}(\text{setup}_{j,m}, \text{jobM}_{j,m}) & \forall m, \forall j \label{eqn:opl_setup2}\\ 
&  \texttt{endOfPrev}(\text{mach}_m, \text{jobM}_{j,m} , 0 ) & \notag \\
& \qquad \leq  \texttt{startOf}(\text{setup}_{j,m})
& \forall m, \forall j \label{eqn:opl_setup3}\\
& \texttt{lengthOf}(\text{setup}_{j,m}) = st_{a_i,a_j}, & \notag \\
& \qquad \text{where } a_i  = \texttt{typeOfPrev}(\text{mach}_m, \text{jobM}_{j,m}, s_m, 0) &\forall m, \forall j \label{eqn:opl_setup_length}\\
& \texttt{forbidExtent}(\text{jobM}_{j,m}, av_m) &\forall m, \forall j \label{eqn:opl_jobs_mach_av}\\
&	\texttt{forbidExtent}(\text{setup}_{j,m}, av_m) &\forall m, \forall j \label{eqn:opl_setup_mach_av}\\
&(\texttt{presenceOf}(\text{job}_{j}) \land \texttt{endOf}(\text{job}_{j}) > lt_j) & \notag \\
 & \lor
  (\text{inBatchWith}_j=i \land \texttt{endOf}(\text{job}_{i}) > lt_j)
 & \notag \\
 &\Rightarrow
  \text{tardy}_j=1 & \forall i,j \in \mathcal{J} \label{eqn:olp_tardy_jobs}\\
& \sum_{m \in \mathcal{M}, j \in \mathcal{J}}
	\texttt{presenceOf}(\text{jobM}_{j,m}) \leq n& \label{eqn:olp_batch_count}
\end{align}

The \texttt{alternative} constraint in equation~\eqref{eqn:olp_exactly_one_machine} ensures that every job $j$, whenever present, is scheduled on exactly one machine and that the interval variables $\text{job}_j$ and $\text{jobM}_{j,m}$ are synchronised, where $m$ is the machine $j$ is assigned to.
Constraint~\eqref{eqn:olp_eligible_mach} ensures that every job $j$, whenever present, is scheduled on an eligible machine.
Constraints~\eqref{eqn:olp_repr-or-pointer} and \eqref{eqn:olp_pointer} ensure that every job is either scheduled or points at some other job that is scheduled and has a lower index.

Constraint~\eqref{eqn:olp_batch_conditions} concerns the conditions that need to be fulfilled for all jobs in a batch: every job needs to have the same attribute as the representative job for the batch, the length of the interval variable corresponding to the representative job must be greater than or equal to the minimum processing time and smaller than or equal to than the maximum processing time of every job in the batch and the start of the representative job may not before the earliest start time of any other job in the batch.
Note that the condition on the processing times and the earliest start time for the representative job are fulfilled by definition of the job interval variables.
Moreover, it is ensured that jobs can only be processed together with another job if the assigned machine is eligible.
Constraint~\eqref{eqn:opl_mach_cap} guarantees that machine capacities are not exceeded.

The \texttt{noOverlap}-constraint in equation~\eqref{eqn:opl_no_overlap} ensures that representative jobs, i.e. batches, on the same machine do not overlap. Passing the setup time matrix $st$ as additional parameter to the \texttt{noOverlap}-constraint enforces a minimal distance between consecutive jobs based on the types (=attributes) of the jobs. This constraint is sufficient to ensure that enough time is left between batches to allow for the setup operation to be performed.
However, we also need to guarantee that the setup operation falls within the same machine availability interval as the following batch.

Therefore, setup times are also modelled as interval variables and have to fulfill constraints~\eqref{eqn:opl_setup1}--\eqref{eqn:opl_setup_length}:
the setup time interval before job $j$ on machine $m$  is present if and only if  job $j$ is present on machine $m$ (constraint~\eqref{eqn:opl_setup1}), setup times end exactly at the start of the following job (constraint~\eqref{eqn:opl_setup2}) and start after the preceding job on the same machine, or at a time $\geq 0$ if the following job is the first on this machine (constraint~\eqref{eqn:opl_setup3}).
The length of the setup time interval is given by the entry $st(a_i, a_j)$ in the setup time-matrix, where $a_i$ is the attribute of the preceding and $a_j$ of the following job on the machine.
Constraints~\eqref{eqn:opl_jobs_mach_av} and \eqref{eqn:opl_setup_mach_av} finally force jobs and setup times to fall entirely within a machine availability interval of the assigned machine:  whenever $\text{jobM}_{j,m}$ or $\text{setup}_{j,m}$ is present, it cannot overlap a point $t$ where $av_m(t)=0$.

The helper variable encoding tardy jobs is set via constraint~\eqref{eqn:olp_tardy_jobs}.
Finally, equation~\eqref{eqn:olp_batch_count} is a redundant constraint that states that the total number of batches is smaller or equal to the total number of jobs.

\section{Alternative model implementations and search strategies}
\label{sec:alternative-implementations+search}

To make a thorough comparison of our CP models possible, we also implemented the CP model presented in Section~\ref{sec:cp-model} using OPL and created a solver-independent MiniZinc model with representative jobs per batch based on the OPL-model presented in Section~\ref{sec:model-repr-job}.
We briefly describe these two alternative implementations in the following. The full models are available on our website.\footnote{\url{https://cdlab-artis.dbai.tuwien.ac.at/papers/ovenscheduling_cons/}}

\subsection{Alternative CP model using OPL}
\label{sec:model-opl-basic}

This model is based on the ILP model described in Section~\ref{sec:mip-model}.\footnote{We based the OPL model on our ILP model as it turned out that using CP Optimizer as solver via MiniZinc delivers particularly good results with the ILP model, see the results in Section~\ref{sec:experiments}.} 

We use optional interval variables for batches: $B_{m,b}  \subseteq [0,l] $ for all $m \in \mathcal{M}$ and all possible batch positions $b \in [n]$. 
Batches are optional since not all $k \cdot n$ batches will actually be used.
Setup times between batches are also modelled using optional interval variables:  $st_{m,b}  \subseteq [0,l] $ for all $m \in \mathcal{M}$ and all possible batch positions $b \in [n]$. 

Besides these interval variables, we use the same decision variables as in Section~\ref{sec:mip-model}: $X_{m,b,j} \in {0,1}$ to encode whether job $j$ is assigned to batch $B_{m,b}$, $A_{m,b} \in \left[0,a\right]$ for the attribute of batch $B_{m,b}$ and $sc_{m,b}$ for setup costs.

The following constraints are used for the batch and setup time interval variables:
\begin{itemize}
    \item \texttt{presenceOf} ensures that a batch is present iff some job is assigned to it. Similarly, setup times are present iff the preceding and following batch are present.
    \item \texttt{endBeforeStart} is used to enforce the order of batches and setup times on the same machine and \texttt{endAtStart} is used to schedule setup times exactly before the following batch.
    \item \texttt{startOf} restricts the start time of batches to be after the earliest start date of any assigned job and \texttt{lengthOf} is used to enforce that batch processing times lie between the minimal and maximal processing time for every assigned job
    \item \texttt{noOverlap} is used as a redundant constraint to ensure that batches on the same machine do not overlap
    \item \texttt{forbidExtent} is used to guarantee that batches and setup times are scheduled entirely within one machine availability interval: whenever $B_{m,b}$ or $st_{m,b}$ is present, it cannot overlap a point $t$ where $av_m(t)=0$.
\end{itemize}

\subsection{Alternative CP model with representative jobs per batch}
\label{sec:cp-repr}

This model is based on the OPL model presented in detail in Section~\ref{sec:model-repr-job} and is implemented using the high-level constraint modeling language Mini\-Zinc~\cite{nethercote_minizinc_2007}.

In MiniZinc, interval variables are not available. Instead, we model (optional) start times of jobs and processing times of jobs in separate decision variables.
We define the following decision variables: 
\begin{itemize}
\item Optional start time of jobs and optional start times of jobs on machines:   \\ $\text{start}_{j} \in [0, l]$ and $\text{startM}_{j,m} \in [0, l]\quad \forall j \in \mathcal{J} \quad \forall m \in \mathcal{M}$. (this variable is absent for jobs which are not representative or not assigned to the given machine).
\item Processing times of jobs and processing times of jobs on machines: \\ $\text{proc}_{j} \in [0, max_T] $ and $\text{procM}_{j} \in [0, max_T] \quad \forall j \in \mathcal{J} \quad \forall m \in \mathcal{M}$. 
\item Optional pointers to representative jobs in the same batch: \\ $\text{inBatchWith}_j \in \mathcal{J} \quad \forall j \in \mathcal{J}$
\item Optional variables for the length of setup times before batches:\\ $\text{setup}_{j,m} \in [0, max_{ST}] \quad \forall j \in \mathcal{J} \quad \forall m \in \mathcal{M}$.
\end{itemize}
\noindent
Moreover, we use the following auxiliary variables to handle machine availability intervals and to define setup times between batches:
\begin{itemize}
\item Optional variables for the availability interval a job is scheduled to an a given machine: 
    $I_{j,m}  \in [I] \quad  \forall j \in \mathcal{J} \quad \forall m \in \mathcal{M}$.
\item  Attribute of previous job on the same machine:
    $\text{attPrev}_{j,m} \in [0,a] \quad  \forall j \in \mathcal{J} \quad \forall m \in \mathcal{M}$ (is equal to $0$ if there is no previous job).
\item  End time of previous job on the same machine:
    $\text{endPrev}_{j,m} \in [0,l] \quad  \forall j \in \mathcal{J} \quad \forall m \in \mathcal{M}$ (is equal to $0$ if there is no previous job).
\item Binary variables encoding whether a job is the first one on its assigned machine: 
$\text{first}_{j,m} \in \{0,1\} \quad  \forall j \in \mathcal{J} \quad \forall m \in \mathcal{M}$.
\end{itemize}
We make use of the following global constraints:
\begin{itemize}
    \item \texttt{alternative} to ensure that every job that is present, i.e., every representative job, is scheduled to exactly one machine,
    \item \texttt{disjunctive} to enforce that representative jobs on the same machine do not overlap.
\end{itemize}

\subsection{Alternative CP model using state functions in OPL}

Another way of modeling batches in CP Optimizer is with the help of so-called \textit{state function variables}~\cite{laborie2018ibm}, as done, e.g., by Ham et al.~\cite{ham2016flexible} for a flexible job-shop scheduling problem with parallel batch processing machines.
These functions model the time evolution of a value that can be required by interval variables; for the OSP, state functions could model the attribute of jobs.
Two interval variables requiring incompatible states cannot
overlap and interval variables requiring compatible states can be batched together. That is, state functions can be used to model that jobs can only be processed together in a batch if they have the same attribute.
We developed an alternative OPL model for the OSP using state functions; it is available on our website.\footnote{\url{https://cdlab-artis.dbai.tuwien.ac.at/papers/ovenscheduling_cons/}}
However, preliminary experiments showed that this model is not competitive with the one using representative jobs presented in Section~\ref{sec:model-repr-job}.
We therefore did not include it in the experimental evaluation presented in this paper.

\subsection{Programmed Search Strategies}
\label{sec:search}

For our solver-independent MiniZinc models, we implemented several programmed search strategies, which are based on variable- and value selection heuristics. Such heuristics determine the order of the explored variable and value assignments for a CP solver and can play a critical role in reducing the search space that needs to be enumerated by the solver.
For our experiments, we implemented the search strategies directly in the MiniZinc language using search annotations.

\noindent
\textbf{Variable Ordering:}
In our implemented search strategies we select at first an auxiliary variable that captures the total number of batches. For this variable we always use a minimum value first heuristic to encourage the solver to look for low cost solutions early in the search.
Afterwards, we sequentially select decision variables related to a job by assigning the associated batch, machine, batch start time, and batch duration for the job\\ 
(i.e., \(B_{1}, M_{1}, S_{(M_1,B_1)}, P_{(M_1,B_1)},\ldots, B_{\vert\mathcal{J}\vert}, M_{\vert\mathcal{J}\vert}, S_{(M_{\vert\mathcal{J}\vert},B_{\vert\mathcal{J}\vert})}, P_{(M_{\vert\mathcal{J}\vert},B_{\vert\mathcal{J}\vert})}\)).
For the CP model using representative jobs briefly presented in Section~\ref{sec:cp-repr}, we select the jobs' start times and processing times
(i.e., \( \text{start}_{1}, \text{proc}_{1},\ldots, \text{start}_{\vert\mathcal{J}\vert}, \text{proc}_{\vert\mathcal{J}\vert} \)).

\noindent
\textbf{Variable Selection Heuristics:}
We use three different variable selection strategies on the set of decision variables that are related to job assignments: \emph{input order} (select variables based on the specified order), \emph{smallest} (select variables that have the smallest values in their domain first, break ties by the specified order), and \emph{first fail} (select variables that have the smallest domains first, break ties by the specified order).


\noindent
\textbf{Value Selection Heuristics:}
We experimented with two different value selection heuristics for the set of variables which is related to job assignments: \emph{min}~(the smallest value from a variable domain is assigned first), and \emph{split}~(the variable domain is bisected to first exclude the upper half of the domain).
 

\noindent
\textbf{Evaluated Search Strategies:} Using the previously defined heuristics we evaluated 8 different programmed search strategies:
\begin{enumerate}
    \item \emph{default}: Use the solver's default search strategy.
    \item \emph{search1}: Assign number of batches first, then continue with the solver's default strategy.
    \item \emph{search2}: Assign number of batches first, then continue with \emph{input order} and \emph{min} value selection on the job variables.
    \item \emph{search3}: Assign number of batches first, then continue with \emph{smallest} and \emph{min} value selection on the job variables.
    \item \emph{search4}: Assign number of batches first, then continue with \emph{first fail} and \emph{min} value selection on the job variables.
    \item \emph{search5}: Assign number of batches first, then continue with \emph{input order} and \emph{split} value selection on the job variables.
    \item \emph{search6}: Assign number of batches first, then continue with \emph{smallest} and \emph{split} value selection on the job variables.
    \item \emph{search7}: Assign number of batches first, then continue with \emph{first fail} and \emph{split} value selection on the job variables.
\end{enumerate}

\section{Lower bounds on the optimum}
\label{sec:lower-bounds}

Providing lower bounds on the value of the objective function for an optimal solution can be very helpful when assessing the quality of a solution to the OSP. 
Moreover, in practice, this can aid heuristic search strategies by including lower bounds in a stopping criterion.
In this section, we describe a procedure to calculate such lower bounds. 

First, we will derive lower bounds on the number of batches required in any feasible solution. 
At the same time, bounds on the cumulative processing time of batches are derived.
Then, these lower bounds will be used to calculate lower bounds on the cumulative setup costs. 
Finally, we briefly comment on the number of tardy jobs.
The calculation of these bounds is exemplified with the help of a randomly created instance consisting of 10 jobs at the end of this section.

\subsection{Minimum number of batches required and minimal cumulative batch processing time}
\label{sec:lower-bounds-batch-number}

Whether two jobs may be combined in a batch depends on various properties: their attributes, their respective minimal and maximal processing times, their respective eligible machines as well as their sizes and the machine capacities. We can use these properties in order to obtain bounds on the minimum number of batches required in any feasible solution.

Since jobs can only be combined if they share the same attribute, bounds on the number of batches required can be calculated independently for all attributes and then added up.
In the following, we thus determine lower bounds on the number of batches needed for jobs of some fixed attribute $r \in 
\mathcal{A}$ and denote by $b_r$ the number of batches with attribute $r$. Similarly, $p_r$ denotes the minimal cumulative processing time of batches with attribute $r$.

\paragraph{Bound based on the maximal machine capacity.}

Due to the capacity constraints of machines, a simple bound on the number of batches required is
\begin{equation}
 b_r \geq \left\lceil  \frac{\sum_{j \in \mathcal{J}: a_j = r}s_j}{\max_{m \in \mathcal{M}}\{c_m\}} \right \rceil.   
 \label{eqn:lower-bound-machine-cap}
\end{equation}
This corresponds to the minimal number of batches required if we assume that jobs can be split into smaller jobs of unit size and that all jobs can be scheduled on the machine with largest machine capacity.

\paragraph{Refinement of the bound based on a distinction between large and small jobs.}
The bound in equation~\eqref{eqn:lower-bound-machine-cap} can be refined by dividing the set of jobs with attribute $r$ into ``large'' and ``small'' jobs in a similar fashion as Damodaran et al.~\cite{damodaran2012simulated}.
Let us denote by $J_r^l$ the set of jobs with attribute $r$ that fulfill the following condition
\[
J_r^l = \left\lbrace j \in \mathcal{J}: a_j = r \wedge \max_{m \in \mathcal{E}_j}\{c_m\} - s_j < \min_{i \in \mathcal{J}: a_i =r}{s_i}\right\rbrace,
\]
i.e., these are ``large'' jobs that cannot accommodate any other jobs in the same batch even if one chooses to process them on an eligible machine with maximal capacity. All jobs in $J_r^l$ thus need to be processed in a batch of their own.
The set of ``small'' jobs $J_r^s$ is defined as the set of jobs with attribute $r$ that are not in $J_r^l$. For these jobs, we can use the previously established lower bound in equation~\eqref{eqn:lower-bound-machine-cap}. This leads to
\begin{equation}
b_r \geq \vert J_r^l \vert + \left\lceil  \frac{\sum_{j \in J_r^s}s_j}{\max_{m \in \mathcal{M}}\{c_m\}} \right \rceil.
     \label{eqn:better-lower-bound-machine-cap}
\end{equation}

\paragraph{Refinement of the bound for small jobs based on machine eligibility.}

The lower bound for small jobs can be further improved by taking into account the eligibility of machines; we will denote this bound on the number of batches by $b_r^E$ (``E'' for eligible machines). Indeed, some jobs might only be executable on a single machine. This implies that batches on these machines must be created even if there are batches on other machines with free capacity. In order to take the jobs' eligible machines into account, let us introduce the following notation:
\[
b_{r,i} = \frac{
\sum_
{j \in J_r^s:  \mathcal{E}_j=\{i\}}
s_j}
{c_i},
\]
i.e., $\lceil b_{r,i} \rceil$ is the minimal number of batches consisting of small jobs that need to be processed on machine $i$ for $i \in \mathcal{M}.$
Moreover, let us denote by $cap_i$ the total remaining 
capacity in the batches created for machine
$i$ after all small jobs that need to be processed on this machine have been scheduled:
\[
cap_i = ( \lceil b_{r,i} \rceil -b_{r,i} ) \cdot c_i
\]
Furthermore, we denote by $S_r$ the sum of job sizes for all small jobs of attribute $r$ that have more than one eligible machine:
\[
S_r = \sum_{j \in J_r^s: \vert \mathcal{E}_j \vert > 1}s_j.
\]
The bound in equation~\eqref{eqn:better-lower-bound-machine-cap} can then be improved as follows:
\begin{align}
b_r & \geq \vert J_r^l \vert + b_r^E \notag\\
&= \vert J_r^l \vert + \sum_{i \in \mathcal{M}} \lceil b_{r,i} \rceil + \left\lceil \frac{\max{(0,S_r - \sum_{i \in \mathcal{M}}{cap_i})}}{\max_{m \in \mathcal{M}}\{c_m\}} \right\rceil
    \label{eqn:even-better-lower-bound-small-jobs-elig-machines}
\end{align}
All small jobs that need to be processed on a specific machine are scheduled on this machine. Then, the remaining small jobs are used to fill up these batches. If there are still jobs left, we assume that they can be scheduled on the machine with maximal capacity. The minimum number of batches required for these jobs corresponds to the last summand in the equation above. Note that the bound $b_r^E$ could be tightened further by checking if there are larger disjoint subsets of eligible machines among the set of jobs $J_r^s$.

Let us now turn to the minimal processing time of the batches we just created. 
For the batches consisting of a large job, the processing time is simply equal to the respective minimal job processing time.
For the small jobs $J_r^s$, we can create the list of minimal batch processing times $\texttt{times}_{r}$ in the following fashion.
For every machine $i \in \mathcal{M}$ for which $\lceil b_{r,i} \rceil > 0$, proceed as follows:
\begin{enumerate}
    \item Create the list of minimal job processing times $mint_j$ of all small jobs that need to be processed on machine $i$.
    \item Sort this list in increasing order.
    \item For the first batch, take the last element in the list, i.e., the largest minimal processing time  (The job with this processing time needs to be processed in some batch and the batch must have this processing time).
    Add this element to $\texttt{times}_{r}$.
    \item For the remaining  batches, pick the first $\lceil b_{r,i} \rceil - 1$ processing times from the list and add them to $\texttt{times}_{r}$.
\end{enumerate}
Finally, for the additional $b_{r,*} = \left\lceil \max{(0,S_r - \sum_{i \in \mathcal{M}}{cap_i})}/ \max_{m \in \mathcal{M}}\{c_m\} \right\rceil$ batches that need to be formed, proceed as follows:
\begin{enumerate}
    \item Create the list of minimal job processing times of all small jobs that can be processed on multiple machines and sort it in increasing order.
    \item  Denote by $\texttt{max}$ the maximum element of this list. Check if $\texttt{max} > \max(\texttt{times}_{r})$:
    \begin{enumerate}
        \item If yes, remove $\max(\texttt{times}_{r})$ from $\texttt{times}_{r}$ and replace it by $\texttt{max}$. (Again, the job with this processing time needs to be processed in some batch). This step can be done even if $b_{r,*} = 0$.
        Then, pick the first $(b_{r,*}-1)$ many elements from the list and add them to $\texttt{times}_{r}$.
        \item If not, pick the first $b_{r,*}$ many elements from the list and add them to $\texttt{times}_{r}$.

    \end{enumerate}
\end{enumerate}
The sum of the elements of the list $\texttt{times}_{r}$ is then a lower bound on the cumulative batch processing time for small jobs of attribute $r$. We denote this number by $p_r^E$.

Overall, this leads to the following lower bound for the cumulative batch processing time
\begin{equation}
    p_r = \sum_{j \in J_r^l}mint_j + p_r^E,
    \label{eqn:lower-bound-runtime-elig-machi}
\end{equation}
where $p_r^E$ is calculated according to the procedure described above.

\paragraph{Alternative refinement of the bound for small jobs based on compatible job processing times.}

In the bound in equation~\eqref{eqn:even-better-lower-bound-small-jobs-elig-machines} we have ignored that jobs can only be processed in the same batch if they have compatible processing times. Indeed, 
two jobs $i$ and $j$ with respective minimal and maximal processing times $mint_i, mint_j$ and $maxt_i, maxt_j$
may be combined in a batch if the intervals of their possible processing times have a non-empty intersection:
\begin{equation}
[mint_i, maxt_i] \cap [mint_j, maxt_j] \neq \emptyset.
\label{eqn:comp-proc-time}
\end{equation}
In the following, we will see how this compatibility requirement can be used to obtain an alternative lower bound on the number of batches and the minimal processing time for small jobs.
For this purpose, let us consider the following special case of the OSP:
\begin{quote}
OSP*: Given a set of jobs $\mathcal{I}$ of unit size defined by their minimal and maximal processing times, i.e. $j=[mint_j, maxt_j]$ for all $j \in\mathcal{I}$, and a single machine with capacity $c \in \mathbb{N}$, how many batches do we need at least in order to process all jobs if jobs can only be processed in the same batch if the compatibility condition~\eqref{eqn:comp-proc-time} is fulfilled?\end{quote}
Several variants of this problem have been studied in the literature, e.g.\ by Finke et al.~\cite{finke2008batch}; the variant that we are interested in corresponds to problem (P2) there.
Solving the problem OSP* will allow us to obtain lower bounds for the OSP: Indeed, as in equation~\eqref{eqn:lower-bound-machine-cap}, we obtain lower bounds on the number of batches required and their processing times if we assume that jobs can be split into smaller jobs of unit size and that all jobs can be scheduled on the machine with largest machine capacity.

The compatibility relation between jobs defined in equation~\eqref{eqn:comp-proc-time} can be represented with the help of a \textit{compatibility graph} $G=(V,E)$, where $V$ is the set of all jobs $\mathcal{I}$ and $(i,j) \in E$ if and only if the jobs $i$ and $j$ have compatible processing times, i.e., the condition in equation~\eqref{eqn:comp-proc-time} is met.
In this graph, a batch forms a (not necessarily maximal) clique.
The problem of solving OSP* is thus equivalent to covering the nodes of the compatibility graph with the smallest number of cliques with size no larger than $c$.

This problem is \NP-complete for arbitrary graphs, but solvable in polynomial time for interval graphs. A simple greedy algorithm is provided  by Finke et al.~\cite{finke2008batch} and referred to as the algorithm GAC (greedy algorithm with compatibility).
By adapting the order in which jobs are processed by the GAC algorithm, we obtain an algorithm that minimizes both the number of batches and the cumulative batch processing time.
We call this algorithm GAC+.

\begin{namedtheorem}[The Algorithm GAC+]
Consider the jobs in non-increasing order $j_1, j_2, \ldots, j_n$  of their minimal processing times $mint_j$, breaking ties arbitrarily.
Construct one batch per iteration until all jobs have been placed into batches.
In iteration $i$, open a new batch $B_i$ and label it with the first job $j^*$ that has not yet been placed in a batch. Starting with $j^*=[mint_{j^*},maxt_{j^*}]$, place into $B_i$ the first $c$ not yet scheduled jobs $j$ for which $mint_{j^*} \in [mint_{j},maxt_{j}]$ (or all of them if there are fewer than $c$). 
\end{namedtheorem}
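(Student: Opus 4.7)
The implicit claim to establish is that GAC+ is both feasible and optimal for OSP*, simultaneously minimizing the number of batches and the cumulative batch processing time. My plan is to prove feasibility first, then optimality of the batch count, and finally optimality of the cumulative processing time given that count.

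For feasibility, I would observe that in iteration $i$ the label $j^*$ is the unscheduled job with the largest minimal processing time (by the non-increasing ordering), so every other unscheduled job $j$ satisfies $mint_j \le mint_{j^*}$. Hence the compatibility test $mint_{j^*}\in[mint_j, maxt_j]$ reduces to $maxt_j \ge mint_{j^*}$. Any two jobs $j,j'$ added to $B_i$ both contain the point $mint_{j^*}$ in their processing-time intervals, so by Helly's property for intervals on the line they pairwise intersect. Combined with the cardinality cap of $c$, each $B_i$ is a valid clique of size at most $c$ in the compatibility graph, with feasible batch processing time $mint_{j^*}$.

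For minimality of the number of batches, I would invoke the result of Finke et al.~\cite{finke2008batch} that the greedy algorithm GAC is optimal on interval graphs with capacity $c$, and show that GAC+ is just a specific tie-broken instance of that greedy schema: in both algorithms, each batch is opened on the currently ``earliest unresolved'' job and filled with up to $c$ jobs that share a common point. The ordering by $mint_j$ in GAC+ corresponds to one admissible scan order for GAC, so the optimality of the batch count is inherited. (Alternatively, one can argue directly by a lower bound: for the threshold $\tau = mint_{j^*}$ used to open $B_i$, the set of jobs whose interval contains $\tau$ is an antichain-free subset whose size divided by $c$ lower-bounds the number of batches needed to cover them; GAC+ meets this bound at the moment it opens $B_i$.)

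For minimality of cumulative processing time, the key observation is that any batch $B$ must have processing time at least $\max_{j\in B}mint_j$, so the total cumulative processing time equals $\sum_{i}mint_{\ell_i}$, where $\ell_i$ is the job of largest $mint$ in batch $i$. Under GAC+, these labels are exactly the jobs $j^*$ chosen at each iteration. I would then use a standard exchange argument: given any optimal batching with the same number of batches but a different labeling, consider the job $j_1$ of globally largest $mint$. It sits in some batch $B^*$ with processing time $\ge mint_{j_1}$; since GAC+ fills its first batch with the $c$ largest-indexed unscheduled jobs that are compatible with $j_1$, any swap of jobs into or out of $B^*$ to match GAC+'s first batch can only \emph{decrease} the maximum-$mint$ contribution of the remaining batches (as we are removing the jobs with the largest $mint$ values still available). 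Iterating this exchange over the batches in the order GAC+ constructs them yields the claim.

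The step I expect to be most delicate is the exchange argument in the final paragraph: one must verify that each local swap both preserves feasibility (pairwise interval compatibility and the capacity bound) and does not increase the number of batches. The feasibility side follows from Helly's property since all inserted jobs contain the label's $mint$ value, but the bookkeeping needed to show that the cumulative processing time is monotonically non-increasing under these swaps (rather than just each individual batch's contribution) will require some care, likely via an inductive invariant that the multiset of labels produced by GAC+ is majorized, in the sense of decreasing-rearrangement, by any competing optimal solution's label multiset.
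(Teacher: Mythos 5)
Your overall strategy -- feasibility via the common point $mint_{j^*}$, then an exchange argument trading a job of larger $mint$ for one of smaller $mint$ -- is the same idea the paper uses, but two steps you defer or assert are exactly where the real work lies, and as written they are gaps. First, you claim batch-count optimality is ``inherited'' from Finke et al.\ because GAC+ is ``a specific tie-broken instance'' of GAC. It is not: GAC+ deliberately changes the processing order (non-increasing $mint_j$, labelling each batch with the largest remaining minimal processing time), and the paper has to re-prove optimality of the batch count for this order by induction on the job set. Your parenthetical fallback -- that the clique of jobs containing $\tau=mint_{j^*}$ gives a lower bound of $\lceil s/c\rceil$ batches which GAC+ ``meets at the moment it opens $B_i$'' -- does not compose into a global optimality proof, since jobs containing $\tau$ that are not placed in $B_i$ may later be scattered across batches opened at other thresholds; a per-iteration local bound does not certify the greedy globally.

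Second, in the exchange step you verify that the modified batch $O_1'$ is feasible (Helly at the label point) but never address the harder direction: when you pull $k\in O_1\setminus B_1$ out of $O_1$ and must park it in the batch $O$ that previously held $j\in B_1\setminus O_1$, you need $k$ to be compatible with \emph{every} job of $O$, and $O$ need not contain the label point. The paper's proof hinges on the observation that, because $j$ precedes $k$ in the order, $[mint_j,mint_{j^*}]\subseteq[mint_k,maxt_k]$, and every job compatible with $j$ already meets the interval $[mint_j,mint_{j^*}]$ (using maximality of $mint_{j^*}$), hence is compatible with $k$; without this your swap may be infeasible. Relatedly, the paper avoids your proposed ``iterate the exchange with a majorization invariant'' bookkeeping entirely: it inducts on the number of jobs, chooses an optimal solution maximizing $\vert O_1\cap B_1\vert$, and derives a contradiction from a single swap, which simultaneously handles batch count and cumulative processing time (your proposal also tacitly assumes, without justification, that some solution is optimal for both measures at once). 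I would recommend restructuring along those lines rather than trying to make the global majorization argument precise.
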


For a set $\mathcal{I}$ of unit size jobs and a maximum batch size $c \in \mathbb{N}$, we denote by $GACb(\mathcal{I},c)$ the number of batches returned by the GAC+ algorithm above.  Similarly, let $GACp(\mathcal{I},c)$ denote the minimal processing time returned by the GAC+ algorithm for this instance.

\begin{theorem}
For any given set of unit size jobs $\mathcal{I}$ and for any given constant $c \in \mathbb{N}$, Algorithm GAC+ solves the problem OSP*, i.e., $GACb(\mathcal{I},c)$ is the minimum number of batches required under the condition that a batch may not contain more than $c$ jobs. Moreover, the cumulative batch processing time $GACp(\mathcal{I},c)$ is minimal.
\label{thm:algo-GAC+}
\end{theorem}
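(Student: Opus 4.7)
The plan is first to confirm that GAC+ produces a valid solution, and then to prove optimality via a lexicographic exchange argument against an optimal solution. Validity is immediate: by the Helly property on intervals of the real line, the jobs placed into any batch share the common point $mint_{j^*}$, where $j^*$ is that batch's representative, so each batch forms a clique in the compatibility graph; the bound of $c$ jobs per batch is enforced by construction. Moreover, the minimum feasible processing time of a batch $B$ equals $\max_{j \in B} mint_j$, and by the non-increasing order in which GAC+ considers jobs, this maximum is always attained at the representative $j^*$. Hence the cumulative processing time of GAC+ is simply the sum of the representatives' minimum processing times.

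For optimality, let $j_1, \ldots, j_n$ denote the jobs listed in non-increasing order of $mint_j$, and let $S^*$ be a feasible solution that is lexicographically optimal in (number of batches, cumulative processing time). I will prove by induction on $\ell$ that $S^*$ can be modified, without increasing either objective, so that $\ell$ of its batches coincide as sets of jobs with the first $\ell$ batches $B_1, \ldots, B_\ell$ produced by GAC+. Iterating this to $\ell = GACb(\mathcal{I},c)$ forces $|S^*| \geq GACb(\mathcal{I},c)$ and its cumulative processing time to be at least $GACp(\mathcal{I},c)$. The base case $\ell = 0$ is trivial, so for the inductive step let $r$ be the representative of $B_\ell$; by the induction hypothesis $r$ lies in some batch $B^*$ of the modified $S^*$ not yet matched to one of $B_1, \ldots, B_{\ell-1}$, and since $r$ has the largest $mint$ among all jobs not yet placed into the first $\ell-1$ GAC+ batches, $B^*$ has processing time exactly $mint_r$.

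To turn $B^*$ into $B_\ell$ I iterate the following exchange: while there exist $a \in B_\ell \setminus B^*$ and $b \in B^* \setminus B_\ell$, let $B^{**}$ be the batch of $S^*$ containing $a$ and swap $a$ with $b$. The new $B^*$ remains feasible because both $a$ and $b$ have intervals containing $mint_r$ (for $a$ by GAC+'s selection rule; for $b$ because $B^*$ had processing time $mint_r$ and $b \in B^*$), and its size and processing time are unchanged. The crucial verification is feasibility of the new $B^{**}$: the key observation is that GAC+ selects $a$ before $b$ among the compatible jobs, so $mint_a \geq mint_b$, while $b$'s membership in $B^*$ forces $maxt_b \geq mint_r \geq \max_{j \in B^{**}} mint_j$, the last inequality holding because $B^{**}$ contains only jobs not yet placed, each of $mint$ at most $mint_r$. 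Together these imply that the original processing time of $B^{**}$ lies in $b$'s interval, so the new $B^{**}$ remains a clique with processing time no larger than before; by lexicographic optimality of $S^*$ it is in fact unchanged. Each swap strictly increases $|B^* \cap B_\ell|$, so the process terminates with $B^* = B_\ell$ after at most $c$ iterations.

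The hard part is precisely this last feasibility check for $B^{**}$: the inequality $mint_b \leq mint_a$, guaranteeing that the replacement job $b$ does not push the processing time of $B^{**}$ above its original value, is only available because GAC+ processes jobs in non-increasing order of $mint$ and fills each batch by the smallest indices first. A selection rule that ignored this ordering could violate the exchange step and break the induction, which explains why this specific tiebreaking is needed not merely for minimizing the number of batches but also for simultaneously minimizing the cumulative processing time.
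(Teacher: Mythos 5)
Your proof is, at its core, the same exchange argument as the paper's: both hinge on the observation that if $a$ is a job selected by GAC+ for the current batch and $b$ is a compatible, not-yet-scheduled job that was not selected, then $a$ precedes $b$ in the non-increasing $mint$ order, so $mint_a \ge mint_b$ while $maxt_b \ge mint_r$, and hence $b$ can take $a$'s place in any other batch of unscheduled jobs without increasing that batch's processing time. The paper packages this as an induction on $\vert\mathcal{I}\vert$ together with a choice of optimal solution maximizing $\vert O_1 \cap B_1\vert$ (a single swap then contradicts maximality); you instead iterate the swap inside one fixed optimal solution and induct on the batch index. These are organizational, not mathematical, differences.

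There is, however, one case your while-loop does not cover: it requires both $B_\ell\setminus B^*$ and $B^*\setminus B_\ell$ to be nonempty, so it terminates prematurely when $B^*\subsetneq B_\ell$ (e.g.\ $B^*=\{r\}$ while GAC+ packs further jobs together with $r$). Since your induction hypothesis demands that a batch of $S^*$ become \emph{equal} to $B_\ell$, this case must be addressed. It is easily patched: because $\vert B^*\vert < \vert B_\ell\vert \le c$ there is room to move any $a\in B_\ell\setminus B^*$ from its batch $B^{**}$ into $B^*$ (feasibility is immediate since $mint_r\in[mint_a,maxt_a]$, and $B^{**}$ only shrinks); but as written the argument stalls. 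The paper sidesteps this by never insisting on equality: in the subset case it invokes the monotonicity of $b(\cdot,c)$ and $p(\cdot,c)$ from equation~\eqref{eqn:min-batch-monotone} to conclude optimality directly. A second, purely cosmetic point: $B^*$ has processing time \emph{at least} $mint_r$, and you should say it may be assumed equal after normalizing every batch of $S^*$ to its minimal feasible processing time; stated as ``exactly $mint_r$'' it is an unjustified assertion about an arbitrary optimal solution.
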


By slight abuse of notation, for a set $\mathcal{J}$ of jobs with arbitrary job sizes, let $GACb(\mathcal{J},c)$ denote the number of batches returned by the GAC+ algorithm when replacing every job $j \in \mathcal{J}$ with $s_j$ identical copies of unit size jobs. Similarly, let $GACp(\mathcal{J},c)$ denote the minimal processing time returned by the GAC+ algorithm for this instance.
With this notation, Theorem~\ref{thm:algo-GAC+} yields the following bounds:
\begin{align}
b_r & \geq \vert J_r^l \vert +  b_r^C, &\text{ with } b_r^C = GACb(J_r^s,\max_{m \in \mathcal{M}}\{c_m\}) 
    \label{eqn:even-better-lower-bound-small-jobs-proc-times} \\
p_r & \geq \sum_{j \in J_r^l}mint_j + p_r^C, &\text{ with } p_r^C = GACp(J_r^s,\max_{m \in \mathcal{M}}\{c_m\}),
\label{eqn:even-better-lower-bound-runtime-small-jobs-proc-times}
\end{align}

\begin{proof}[Proof of Theorem~\ref{thm:algo-GAC+}]
We follow the proof of Theorem~4 in~\cite{finke2008batch}, extending it to include the minimization of the cumulative batch processing time and adapting it to our variant of the algorithm.
The proof is by induction over the number of jobs and the induction start with a single job is trivial.

Let us start with a simple observation about the minimum number of batches and the minimal batch processing time.
For this, let $b(\mathcal{I},c)$ denote the minimum number of  batches required to schedule all jobs in $\mathcal{I}$ under the condition that a batch may not contain more than $c$ jobs. Similarly, let $p(\mathcal{I},c)$ denote the minimal cumulative batch processing time in any schedule of all jobs in $\mathcal{I}$.
Then these two functions are monotonous in $\mathcal{I}$, i.e.:
\begin{align}
\begin{split}
&b(\mathcal{I},c) \geq b(\mathcal{I} \setminus \{j \},c) \\
\text{and } &p(\mathcal{I},c) \geq p(\mathcal{I} \setminus \{j \},c), \text{ for every } j \in \mathcal{I}. \label{eqn:min-batch-monotone}
\end{split}
\end{align}

For the induction step, let $\mathcal{B} = (B_1, B_2, \ldots, B_b)$ be a sequence of batches constructed by the algorithm GAC+ for $\mathcal{I}$, $B_1$ being the first batch constructed by the algorithm and $p \in \mathbb{N}$ being the cumulative batch processing time of $\mathcal{B}$.
Let the label of $B_1$ be the job $i = [mint_i, maxt_i]$, i.e., $mint_i$ is maximal among the minimal processing times and the processing time of $B_1$ is equal to $mint_i$.
For the set of jobs $\mathcal{I} \setminus B_1$, the algorithm constructs the batch sequence $B_2, \ldots, B_b$ (see the definition of GAC+).
By the induction hypothesis we know that $B_2, \ldots, B_b$ is optimal for $\mathcal{I} \setminus B_1$, i.e., $b(\mathcal{I} \setminus B_1,c) = \vert \mathcal{B} \vert -1 = b-1$ and $p(\mathcal{I} \setminus B_1,c) = p - mint_i$.
It thus suffices to show that there exists a batch sequence of minimal length and with minimal batch processing time that contains the batch $B_1$.

Let $O_1$ be the batch containing $i$ in an optimal sequence of batches $\mathcal{O}$ and let us choose $\mathcal{O}$ such that the size of the intersection $\vert O_1 \cap B_1 \vert$ is maximal. We will prove that $O_1 = B_1$.

First note that $i \in O_1$ implies that $mint_i \in [mint_j, maxt_j]$ for all jobs $j \in O_1$: $mint_j \leq mint_i$ since $mint_i$ is maximal and $mint_i \leq maxt_j$ since every job $j \in O_1$ needs to be compatible with $i$. Thus the processing time of batch $O_1$ is equal to $mint_i$.

We now distinguish two cases: $\vert B_1 \vert < c$ and $\vert B_1 \vert= c$, where $c$ is the maximum batch size. If $\vert B_1 \vert< c$, the batch $B_1$ contains all neighbors of $i$ in the compatibility graph $G$ corresponding to the set of jobs $\mathcal{I}$. Since $O_1$ is a clique containing $i$, it follows that  $O_1 \subseteq B_1$.
Then by monotonicity (as stated in equation~\eqref{eqn:min-batch-monotone}), we have
\begin{align*}
\vert \mathcal{B} \vert -1 & = b(\mathcal{I} \setminus B_1,c) \leq b(\mathcal{I} \setminus O_1,c) = \vert \mathcal{O} \vert -1 \\
\text{ and } p - mint_1 & = p(\mathcal{I} \setminus B_1,c) \leq p(\mathcal{I} \setminus O_1,c) = p(\mathcal{I},c) -mint_i ,
\end{align*}
which proves that $\mathcal{B}$ is optimal both in terms of the number of batches and in terms of the cumulative processing time.

For the case $\vert B_1 \vert= c$, we assume towards a contradiction that there exists a job $j=[mint_j, maxt_j] \in B_1 \setminus O_1$.
This implies that there must also exist a job $k=[mint_k, maxt_k] \in O_1 \setminus B_1$. (As before, $ O_1 \subset B_1$ would imply that $\mathcal{B}$  is optimal. However, the job $j$ could have been added to $O_1$ without having an impact on the number of batches or the batch processing time required by $\mathcal{O}$. This however is a contradiction to the choice of $\mathcal{O}$).
From the definition of the algorithm, we know that $B_1$ consists of the first $c$ jobs containing $mint_i$. Therefore, $j < k$ and $mint_j \geq mint_k$. Moreover, as noted earlier, we know that $mint_i \in k=[mint_k, maxt_k]$ and thus $[mint_j,mint_i] \subseteq k$.

We then define $O_1' \coloneqq (O_1 \setminus \{ k \}) \cup \{j\}$ and redefine the batch $O \in \mathcal{O}$ that contains $j$ as $O' \coloneqq (O \setminus \{ j \}) \cup \{k\}$.
Both these batches fulfill the compatibility constraint for the processing times: $O_1'$ does because $mint_i$ is contained in $j$ and in all jobs in $O_1$ and $O'$ does because all jobs that are compatible with $j$ are also compatible with $k$ (If job $s$ is compatible with $j$, 
this means that $s=[mint_s, maxt_s] \cap [mint_j,mint_i] \neq \emptyset$, since $mint_i$ is maximal among all minimal processing times. On the other hand, we already noted that 
$[mint_j,mint_i] \subseteq k$ and thus $s \cap k \neq \emptyset,$ which means that $s$ and $k$ are compatible.)
As for the processing times of the batches, both batches $O_1$ and $O_1'$ have the processing time $mint_i$ as they contain job $i$.
For the batch $O'$, we have replaced the job $i$ with a job with smaller or equal processing time ($mint_i \geq mint_k$). Thus the processing time of batch $O'$ is smaller or equal to the batch processing time of $O$.
We have thus produced another optimal sequence of batches $\mathcal{O}'= \mathcal{O} \setminus \{ O_1, O\} \cup \{O_1', O'
\}$.
However, $\vert  O_1'\cap B_1 \vert > \vert  O_1\cap B_1 \vert$ which is in contradiction to the choice of $\mathcal{O}$. This finishes the proof. 
\end{proof}

\paragraph{Best overall bound on the number of batches and the minimal processing time}

Finally, for an overall bound on the number of batches and the minimal processing time, we compute the bounds based on the eligibility of machines as in equation~\eqref{eqn:even-better-lower-bound-small-jobs-elig-machines} and the bounds based on the compatibility of processing times as in equation~\eqref{eqn:even-better-lower-bound-small-jobs-proc-times}, take the maximum of these two values for every attribute and sum up these numbers:
\begin{align}
b & \geq \sum_{r=1}^a (\vert J_r^l \vert + \max(b_r^E, b_r^C)) \label{eqn:best-lower-bound-batch-number} \\
p & \geq \sum_{r=1}^a(\sum_{j \in J_r^l}mint_j + \max(p_r^E, p_r^C)),
\label{eqn:best-lower-bound-proc-time}
\end{align}

\subsection{Bounds on the other components of the objective function}
\label{sec:lower-bounds-obj-components}

\paragraph{Setup costs}

A simple lower bound on the setup costs is
\begin{equation}
sc \geq \sum_{r=1}^a b_r \cdot \min_{s \in \{1, \ldots, a \}}\{ sc(s,r) \}, 
\label{eqn:lower-bound-setup-costs-before-batch}
\end{equation}
i.e., we assume that the setup costs \textit{before} batches are always minimal.
Note that $\min_{s}\{ sc(s,r) \}$ is the minimum value in the $r$-th column of the setup cost-matrix.

A similar bound can be derived assuming that the setup costs \textit{after} batches are always minimal. For this case, we need to include initial setup costs for all machines to which batches are scheduled but may not include setup costs after the last batch on every machine.
In order to do so, we create the list \texttt{setup\_costs}. For every attribute $r$, we add $b_r$ copies of $\min_{s \in \{1, \ldots, a \}}\{ sc(r,s) \}$ to \texttt{setup\_costs} (this is the minimum value in the $r$-th row of the setup cost-matrix). Moreover, for every machine $m$, we add the element $\min_{s \in \{1, \ldots, a \}}\{ sc(s_m,s) \}$ to \texttt{setup\_costs}.
This list thus consists of $\sum_r b_r + m = b + m$ elements.
The list is then sorted increasingly and the sum of the first $b$ elements is taken. 
We thus have: 
\begin{equation}
sc \geq \sum_{i = 1}^b \texttt{setup\_costs}(i).
\label{eqn:lower-bound-setup-costs-after-batch}
\end{equation}
Altogether, we have the following lower bound on the setup costs
\begin{equation}
    sc \geq \max
    \left( \sum_{r=1}^a b_r \cdot \min_{s \in \{1, \ldots, a \}}\{ sc(s,r) \}, 
    \sum_{i = 1}^b \texttt{setup\_costs}(i)
    \right).
    \label{eqn:lower-bound-setup-costs}
\end{equation}

Note that it is not possible to obtain a lower bound on the setup costs for any feasible solution by computing the minimum setup costs for a solution that uses the minimum number of batches per attribute calculated previously. Indeed, if the matrix of setup costs does not fulfill the triangle inequality, it can be advantageous to introduce additional batches if the sole goal is to reduce setup costs. Although this scenario does not make much sense in a practical setting, we cannot safely exclude it for our randomly generated instances.
The bound derived in equation~\eqref{eqn:lower-bound-setup-costs} is nonetheless an actual lower bound on any feasible solution for the given problem instance; it is not possible to achieve smaller cumulative setup costs by adding additional batches since we have assumed minimal setup costs before and after every batch.

\paragraph{Number of tardy jobs.}
As for the number of tardy jobs, we cannot infer that much from the instance itself. 
What we can do, is to schedule every job independently in a batch of its own on the first eligible machine that is available and to compute the resulting completion time. If the completion time is after the job's latest end date, this will be a tardy job in every solution.
By computing the number of jobs that finish late in this way, we obtain a lower bound on the number of tardy jobs in any solution.

\subsection{Example}

Let us now consider the following randomly created instance consisting of 10 jobs, 2 machines and 2 attributes in order to exemplify the calculation of the bounds on the objective function that we just derived.
The parameters for the generation of the random instance were chosen in such a way that the job sizes and machine capacities do not allow for much batching (see Section~\ref{sec:random-instances} for the description of our random instance generator).

\begin{center}
{
\begin{tabular}{p{2.5cm}|p{2cm}p{2cm}}
Machine & $M_1$ & $M_2$   \\\hline
$c_m$ & 18 & 20 \\\hline
$s_m$ & 1 & 2 \\\hline
Availability \mbox{intervals} $[as, ae]$ & [21,250]  & [103,259] \end{tabular}
\quad
$
sc = \begin{pmatrix}
6 & 8  \\
10 & 10
\end{pmatrix}
$
}

\small{
\begin{tabular}{l|cccccccccc}
Job $j$ & \textbf{1} & \textbf{2} & \textbf{3} & 4 & 5 & \textbf{6} & 7 & 8 & 9 & 10 \\\hline
$\mathcal{E}_j$ & $M_1$ & $M_1$ &  & $M_1$ & $M_1$ & & $M_1$ & $M_1$ & &$M_1$\\
& $M_2$ & $M_2$   & $M_2$ & &$M_2$ & $M_2$ & $M_2$& &$M_2$ &$M_2$ \\\hline
$et_j$ & 2 & 3 &8&1&39&41&40&31&27&16\\
$lt_j$  &  16&20&43&24&55&64&56&89&58&27 \\
$mint_j$  & 11&10&19&19&10&19&11&50&19&11\\
$maxt_j$    &  11&50&19&19&50&50&50&50&19&50\\
$s_j$     &  \textbf{18}&\textbf{16}&\textbf{17}&2&6&\textbf{19}&11&11&4&14\\
$a_j$      &  2&2&2&1&2&2&2&2&1&1\\
\end{tabular}
}
\end{center}

\begin{table}[]
\begin{center}
\small{
\begin{tabular}{l|ccc || ccc }
& \multicolumn{3}{c||}{number of batches} & \multicolumn{3}{c}{minimal processing time} \\
& \eqref{eqn:lower-bound-machine-cap} 
& $b_r^E$\eqref{eqn:even-better-lower-bound-small-jobs-elig-machines}
& $b_r^C$\eqref{eqn:even-better-lower-bound-small-jobs-proc-times}
 & large jobs & $p_r^E$ \eqref{eqn:lower-bound-runtime-elig-machi} & $p_r^C$ \eqref{eqn:even-better-lower-bound-runtime-small-jobs-proc-times}\\\hline
attribute 1 & \multirow{2}{0.5cm}{6}  &  2 & 1  & 0 & 38 & 19 \\ 
attribute 2 & & 6 & 6  & 59 & 60 & 61\\ \hline
sum & \multicolumn{3}{c||}{8} & \multicolumn{3}{c}{158}
    \end{tabular}}
\end{center}
    \caption{Lower bounds for the number of batches and cumulative batch processing times for an example instance with 10 jobs.}
    \label{tab:ex-lower-bounds}
\end{table}
    
The values of the lower bounds for the number of batches required and the cumulative batch processing times are summarized in Table~\ref{tab:ex-lower-bounds}. We explain their calculation in what follows.
The sets of large jobs are
\[
J_1^l = \emptyset \text{ and } J_2^l = \{ 1, 2, 3, 6\}  ,
\]
we thus need 4 batches for the large jobs of attribute 2 and none for attribute 1. The large jobs and their sizes are written in bold in the table above. The processing times for large batches are given by the minimal processing times of the large jobs and contribute $11 + 10 + 19 + 19 = 59$ to the cumulative batch processing time.

For the processing time of small jobs, we exemplify the calculation of the bound based on eligible machines for attribute 1 and the one of the bound based on compatible processing times for attribute 2.
For attribute 1, we have three small jobs (4, 9 and 10) of which job 4 can only be processed on machine 1 and job 9 only on machine 2. Two different batches are thus required for these jobs.
Since the cumulative remaining machine capacity ($2\cdot \max\{c_m\}-(s_4+s_9)=40-(2+4)=34$) is sufficient to accommodate job 10 with $s_{10}=14$, these two batches suffice.
In this case, the runtime of the two batches is simply given by the minimal runtime of the two jobs 4 and 9 and is equal to 38 in total.
As for attribute 2, the small jobs are: 5, 7 and 8. Their respective intervals of possible processing times are $[10,50]$, $[11,50]$ and $[50,50]$. In order to follow algorithm GAC+, we sort the list of jobs in decreasing order of their minimal processing times: $(8, 7, 5)$.
A first batch with processing time $50$ is created for job $8$. The remaining capacity in this batch is $20-11=9$ (assuming that is assigned to the batch with maximal capacity).
We thus proceed in the list of jobs and add 9 of the 11 units of job 7 to this batch.
For the remaining 2 units of job 7, a new batch with processing time $11$ is created. We can add the entire job 5 to this batch.
In total, two batches with a cumulative processing time of $61$ are needed for the small jobs of attribute 2.

For the calculation of setup costs, equation~\eqref{eqn:lower-bound-setup-costs-before-batch} gives:
\begin{align*}
sc &\geq b_1 \cdot \min_{s}\{ sc(s,1) \} + b_2 \cdot \min_{s}\{ sc(s,2) \} \\
& = 2 \cdot \min(6,10) + 6 \cdot \min(8,10)= 60.
\end{align*}
For equation~\eqref{eqn:lower-bound-setup-costs-after-batch}, the list of minimal setup costs \texttt{setup\_costs} contains $\min_{s}\{ sc(1,s) \}=\min(6,8)$ three times (twice for attribute 1 and once for the initial state of machine 1) and $\min_{s}\{ sc(2,s) \}=\min(10,10)$ seven times (six times for attribute 2 and once for the initial state of machine 2). We take the $b=8$ smallest values from this list and thus have:
\[
sc \geq \sum_{i = 1}^8 \texttt{setup\_costs}(i) = 3 \cdot 6 + 5 \cdot 10 = 68.
\]
We can take the maximum of these two values and obtain that $sc \geq 68$ for this instance.

Due to the given machine availability intervals for this instance, all jobs except jobs 5, 7 and 8 always finish late. Thus, the number of tardy jobs is $\geq 7$ in any feasible solution.

Using weights $w_p=4$, $w_{sc}=1$ and $w_t=100$ and aggregating  the lower bounds for three components of the objective function as in~\eqref{eqn:obj-function}, we obtain that:
\[
obj \geq \frac{4 \cdot 158/18 + 68/10 + 100 \cdot 7 }{10 \cdot 105} \approx 0.7066,
\]
since the average minimal processing time is $\lceil 179/10\rceil =18$ and the overall maximal setup cost is $10$.

For this particular instance and the given weights, an optimal solution has 8 batches, a cumulative processing time of $158$, setup costs of $72$ and 8 jobs that finish too late (all except jobs 5 and 7 which are scheduled in the same batch at the second position on machine 1).
The value of the objective function is $obj \approx 0.8002$.
The gap between the calculated lower bounds and the optimal solution are thus $0 \%$ for the runtime, $5.5\%$ for the setup costs and $12.5\%$for the number of tardy jobs; the gap for the aggregated objective function is $11.7\%$ (due to the high weight given to tardy jobs).

\section{Random instance generator and construction heuristic}
\label{sec:random+heuristic}

\subsection{Generation of random instances}
\label{sec:random-instances}

The random instance generator we propose is based on random instance generation procedures for related problems from the literature~\cite{malve2007genetic,gallego2009algorithms}. However, as the existing variants were designed for batch scheduling problems which neither include machine eligibility constraints, machine availability times nor setup costs and times, the random generation of the associated instance parameters is a novel contribution of this paper.
The list of parameters for this instance generator is given in Table~\ref{tab:param_random_instance_generator}.
\begin{table}[ht]
	\centering
        \small
		\begin{tabular}{p{1.6cm}p{8cm}p{4cm}}
		Name &       Description &        Values       \\\toprule
		\multicolumn{3}{l}{Parameters relating to jobs} \\\midrule
		$n$ & number of jobs & 10, 25, 50, 100  \\
		$max_T$ & overall maximum processing time & 10, 100 \\
		\texttt{max\_time} & \texttt{true} if jobs have a max. processing time & \texttt{true}, \texttt{false} \\
		$\rho$ & determines spread of earliest start times & 0.1, 0.5 \\
		$\phi$ & determines time from earliest start to latest end of job & 2, 5 \\
		$\sigma$ & determines number of eligible machines per job & 0.2, 0.5 \\
		$s$ & maximum job size & 5, 20 \\
		\midrule
		\multicolumn{3}{l}{Parameters relating to attributes} \\\midrule
		$a$ & number of attributes & 2, 5\\
		\texttt{s\_time} \newline 
		= \texttt{s\_cost} & type of setup-time matrix \newline type of setup-cost matrix  & \texttt{constant}, \texttt{arbitrary}, \texttt{realistic}, \texttt{symmetric} \\
		\midrule
		\multicolumn{3}{l}{Parameters relating to machines} \\\midrule
		$k$ & number of machines & 2, 5\\
		$min_C \newline = s$ & lower bound for max. machine capacity \newline(=max. job size) & 5, 20 \\
		$max_C$ & upper bound for maximum machine capacity & 20, 100 \\
		$\tau$ & lower bound for the fraction of time machines are available & 0.25, 0.75\\
		$max_I$ & max.\ number of availability intervals & 5 \\
	\end{tabular}
			\caption{List of parameters of the random instance generator.}
	\label{tab:param_random_instance_generator}
\end{table}

\noindent
\emph{Jobs.} The list of $n$ jobs is generated as follows. 
First, for every job $j$, the minimal processing time $mint_j$ is chosen using a discrete uniform distribution $U(1,max_T)$.
For the maximum processing time, there are two options: either there is no upper limit on the processing time of jobs (\texttt{max\_time} = \texttt{false}), in which case the maximum processing time is set to $max_T$ for all jobs. Or, if \texttt{max\_time} = \texttt{true},  the maximum processing time for a job is chosen using a discrete uniform distribution $U(mint_j,max_T)$.
Next, the earliest start and latest end times are determined for every job. The earliest start time $et_j$ is chosen similarly as by Velez Gallego~\cite{gallego2009algorithms} according to a discrete uniform distribution $U(0,\lceil \rho \cdot Z \rceil)$ where  $\rho \in [0,1]$ and $Z = \sum mint_j$ is the total processing time of all jobs. If $\rho =0$, all jobs are available right at the beginning and as $\rho$ grows, the jobs are released over a longer interval. The latest end time $lt_j$ is chosen as in~\cite{malve2007genetic} according to $lt_j =et_j +\lfloor U(1, \phi)\cdot mint_j \rfloor$ where $\phi \geq 1$. If $\phi = 1$, the
latest end time is equal to the sum of the earliest start time and the minimum processing time, meaning that all jobs must be processed immediately in order to finish on time.
As $\phi$ grows, more time is given for every job to be completed and tardy jobs are less likely. 
Regarding the set of eligible machines for a job, one machine is chosen at random among all machines. Additional machines are then added to this set with probability $\sigma$ each.
The size $s_j$ and attribute $a_j$ of a job are both chosen at random between $1$ and the maximum job size $s$ or number of attributes respectively.

\noindent
\emph{Attributes.} The setup times and setup costs matrices can be of four different types: Constant, arbitrary, realistic and symmetric. For the type ``constant'', setup times/costs are all equal to a randomly chosen constant between 0 and $\lceil max_T/4\rceil$. For the type ``arbitrary'', every entry is chosen independently at random between 1 and $\lceil max_T/4\rceil$. For the type ``realistic'', setup times/costs between two batches of the same attribute are lower and are chosen independently at random between 0 and $\lceil max_T/8\rceil$, whereas setup times/costs between different attributes are higher and are chosen between $\lceil max_T/8\rceil + 1$ and $\lceil max_T/4\rceil$. For the type ``symmetric'' a symmetric matrix is generated with random entries between 0 and $\lceil max_T/4\rceil$.

\noindent
\emph{Machines.} The maximum machine capacity $c_m$ is randomly chosen between $min_C$ and $max_C$ where the lower bound $min_C$ is set to the maximum job size $s$ to ensure that every job fits into every machine.
The initial state $s_m$ is chosen at random among the set of attributes $\{ 1, \ldots, a\}$.
For the machine availability times, we first fix the length of the scheduling horizon $l$.
If we assume that every job is processed in a batch of its own and that all jobs are processed on the same machine, the total runtime is at most equal to the sum of all processing times $Z$ plus $n$ times the maximal setup time $max_{st}$. The parameter $\tau \in (0,1]$ is a lower bound for the fraction of time that every machine is available. Thus, if $max_{et}$ is the latest earliest start time, all jobs should--on average--be finished at time
\[
l = max_{et} + \left\lceil (Z + n\cdot max_{st}) / (\tau) \right\rceil
\]
which we use to set the length of the scheduling horizon. 
Note that if the latest end date of a job is greater than this upper bound we simply use it instead.
Now, for every one of the $k$ machines, we pick the number of availability intervals $I$ randomly between 1 and $max_I$.
Every interval $[start_i, end_i]$ should be long enough to accommodate at least a single job with minimal processing time $min_T = \min(mint_j: j \in \mathcal{J})$ (plus the necessary setup times).   
Thus, the minimum distance between two interval start times $start_i$ and $start_{i+1}$ is $d = min_T + max_{st}$.
We first pick the start time $start_{1}$ of the first interval: 
In order to guarantee that every machine is available at least a fraction $\tau$ of the time, $0 \leq start_1 \leq \lfloor l \cdot (1-\tau) \rfloor$ must hold and in order to leave enough time for all availability intervals, it has to hold $start_1 \leq l - I \cdot d$.
Next, for the start times of the remaining intervals, we pick $I-1$ random integers between $(start_{1}+d)$ and $(l-d)$ that are at least $d$ apart.
Finally, we determine the end time $end_i$ of the $i$-th interval:
\[
end_i = start_i +\max(d,\lceil U(\tau, 1)\cdot(start_{i+1} - start_i) \rceil)
\]

\subsection{Construction heuristic} 
\label{sec:heuristic}

We designed a construction heuristic that can find initial solutions and can also serve as upper bound on the optimal solution cost.
The heuristic is a \textit{dispatching rule} that prioritizes jobs first by their earliest start date and then by their latest end date.
Similar rules, such as the Earliest Due Date (EDD) dispatching rule presented by Uzsoy~\cite{uzsoy1995scheduling} for batch scheduling on a single machine with incompatible job families, have been proposed for other batch scheduling problems.

The heuristic starts at time $t =0$. At every time step $t$, the list of currently available machines is generated: these are all machines on which no batch is running at time $t$ and for which $t$ lies within a machine availability interval.
For these available machines, the list of available jobs is generated:
these are jobs that have not yet been scheduled, have already been released and can be processed on one of the available machines. Among these jobs, the one with the earliest due date is chosen. If there are several jobs that fulfill these conditions, the largest one is chosen first. Let this job be $j \in \mathcal{J}$.
Moreover, if there are several machines for which $j$ fits into the current availability interval and that are eligible for $j$, the heuristic chooses the machine for which the setup time from the previous batch (or from the initial state of the machine) is minimal.

Once a job $j$ is scheduled, the algorithm tries to add other jobs that are currently available to the same batch:
For this, jobs are only considered if their attribute as well as minimal and maximal processing times are compatible with job $j$ and the combined batch size does not exceed the machine capacity.
Moreover, unless job $j$ finishes too late anyway, other jobs are only considered if their processing time does not force $j$ to finish late.
If there are several jobs that meet these requirements, we sort them in decreasing order of their latest end dates and keep adding jobs to the batch as long as the machine capacity and compatibility requirements are fulfilled.
If there are no jobs (left) that can be added to the batch and the maximum machine capacity is not reached yet, we look ahead and also consider jobs with an earliest start time $>t$. However, we only consider compatible jobs that do not force job $j$ to finish late (unless this was already the case) and that can still be processed within the current machine availability interval.
Once all jobs for the batch have been chosen, the batch is scheduled at the earliest possible time.

If there are other jobs that can be processed at time $t$, we schedule them and fill up their batches in the same way. If no job can be scheduled, the time is increased to $t+1$ and the above procedure is repeated until the end of the scheduling horizon is reached or all jobs have been scheduled.

\section{Experimental evaluation}\label{sec:experiments}

\subsection{Benchmark instances}

Using our random instance generator described in Section~\ref{sec:random-instances}, we created a large set of benchmark instances to evaluate the performance of our proposed models.
First, we executed the random generator once for every possible configuration of the 15 parameter values specified in Table~\ref{tab:param_random_instance_generator}.
Thereby we produced 1024 instances for each of the 16 combinations of the parameters $n$, $k$ and $a$.
Then we randomly selected 5 instances from every set of 1024 instances, creating a set of 80 instances which we used throughout our experiments. This set thus consists of 20 instances each with 10 (instances 1-20), 20 (21-40), 50 (41-60) and 100 jobs (61-80). All benchmark instances turn out to be satisfiable and solvable by the construction heuristic described in Section~\ref{sec:heuristic}. This reflects our real-life industrial application, for which feasible solutions usually can be found heuristically and the main aim is to find cost-minimal schedules.
Note that this set of benchmark instances is slightly different to the one presented in the authors' conference paper~\cite{lackner_et_al:LIPIcs.CP.2021.37}, since we have added initial states of machines.
Note also that the objective function as defined in equation~\eqref{eqn:obj-function-int} no longer includes setup times; we can therefore not expect that the optimal solutions for this benchmark set have the same solution cost as in the conference paper~\cite{lackner_et_al:LIPIcs.CP.2021.37}. 

\subsection{Experimental setup}

An overview of the models we evaluated on our benchmark set can be found in Table~\ref{tab:evaluated-models}:
We implemented the CP- and ILP-models presented in Sections~\ref{sec:cp-model}, \ref{sec:mip-model} and~\ref{sec:cp-repr}  using the high-level constraint modeling language MiniZinc~\cite{nethercote_minizinc_2007} and used recent versions 
of Chuffed, OR-Tools, CP Optimizer and Gurobi.
For Chuffed and OR-Tools, we used all 7 search strategies described in Section~\ref{sec:search} and compared them with the solver's default search strategy.
For Chuffed, we activated the free search parameter which allows the solver to interleave between the given search strategy and its default search.
Furthermore, we investigated a warm-start approach with Gurobi (for the other solvers, warm-start is currently not supported by MiniZinc): the construction heuristic described in Section~\ref{sec:heuristic} was used to find an initial solution which was then provided to the model.
Finally, the OPL-models presented in Sections~\ref{sec:model-repr-job} and \ref{sec:model-opl-basic} were run using CP Optimizer in IBM ILOG CPLEX Studio.
This results in a total of 59 different combinations of models, solvers and search strategies per instance.
The time limit for every one of these combinations was set to one hour per instance; this includes the flattening time in MiniZinc and the model extraction time for CP Optimizer.
To configure the objective function for our experiments, we selected weights that model the importance of the individual components in typical practical applications together with our industry partners : $w_p=4$,  $w_{sc}=1$ and $w_t=100$. 

\begin{table}[]
    \centering
\begin{tabular}{m{2.5cm}m{2cm}m{2cm}m{2.5cm}}
    \toprule
    & CP model with batch positions & ILP model with batch positions & CP model with representative jobs\\ \midrule
      solver-independent MiniZinc model   & ``cp'' (Section~\ref{sec:cp-model}) & ``ilp'' (Section~\ref{sec:mip-model}) & ``cp-repr'' (Section~\ref{sec:cp-repr})\\ 
      MiniZinc model with warm-start  & ``cp-ws''  & ``ilp-ws''  & ``cp-repr-ws''\\ 
      OPL-model for CP Optimizer  & -- & ``opl-ilp'' (Section~\ref{sec:model-opl-basic}) & ``opl-cp-repr'' (Section~\ref{sec:model-repr-job})\\ \bottomrule
    \end{tabular}
    \caption{Overview of the experimentally evaluated models.}
    \label{tab:evaluated-models}
\end{table}

Experiments were run on single cores, using a computing cluster with 10 identical nodes, each having 24 cores, an Intel(R) Xeon(R) CPU E5--2650 v4 @ 2.20GHz and 252 GB RAM\@.

Throughout this section, we use abbreviations for the evaluated solvers and models. Solvers are referred to as ``chuffed'', ``gurobi'', ``cpopt'' and ``ortools''. The abbreviations for our models are summarized in Table~\ref{tab:evaluated-models}.

\subsection{Experimental results}

In the following we summarize our findings.
Table~\ref{tbl:overall_stats} provides an overview of the final results produced on the 80 benchmark instances with all evaluated methods.
Based on the experiments with different search strategies that are presented in Section~\ref{sec:results-search}, 
we selected the search strategy that lead to best results per solver and model.
This lead to the following search strategy-solver pairings in the overview of the experimental results: chuffed-cp with \emph{search6}, chuffed-ilp with \emph{search4}, chuffed-cp-repr with \emph{search3}, ortools-cp with \emph{search4}, ortools-ilp with \emph{search3}, ortools-cp-repr with \emph{search6}, and for all other solvers we used the \emph{default} search strategy.
The first column in each row denotes the evaluated solver and model. 
From left to right, the columns display: the number of solved instances, the number of instances where overall best, i.e. minimal, solution cost results could be achieved, the number of obtained optimal solutions, the number of optimality proofs, the number of instances where the fastest optimality proof could be found and the number of instances where the best, i.e. maximal, lower bound could be found. 

\begin{table}[ht]
  \setlength{\tabcolsep}{4pt}
  \centering
\begin{tabular}{lllllll}
\toprule
solver            & solved & best & optimal & proven opt & fastest proof & best lower bound \\ \midrule
gurobi-ilp        & 64     & 53   & 37      & 31         & 7             & 36               \\
gurobi-cp         & 46     & 35   & 33      & 23         & 0             & 24               \\
gurobi-cp-repr    & 33     & 27   & 27      & 25         & 0             & 25               \\
gurobi-ilp-ws     & 80     & 52   & 38      & 34         & 11            & 36               \\
gurobi-cp-ws      & 80     & 36   & 33      & 24         & 0             & 25               \\
gurobi-cp-repr-ws & 39     & 29   & 29      & 26         & 2             & 28               \\
cpopt-ilp         & 71     & 38   & 32      & 13         & 0             & 22               \\
cpopt-cp          & 67     & 33   & 30      & 13         & 0             & 26               \\
cpopt-cp-repr     & 48     & 30   & 29      & 18         & 0             & 50               \\
chuffed-ilp       & 70     & 24   & 24      & 14         & 0             & n/a              \\
chuffed-cp        & 67     & 24   & 24      & 14         & 0             & n/a              \\
chuffed-cp-repr   & 28     & 23   & 23      & 20         & 0             & n/a              \\
ortools-ilp       & 78     & 20   & 20      & 20         & 0             & 20               \\
ortools-cp        & 71     & 19   & 19      & 19         & 0             & 19               \\
ortools-cp-repr   & 56     & 21   & 21      & 18         & 3             & 18               \\
opl-ilp           & 49     & 20   & 20      & 17         & 0             & 17               \\
opl-cp-repr       & 80     & 66   & 37      & 28         & 15            & 28           \\\bottomrule   
\end{tabular}
\caption{Overview of the final computational results based on 80 benchmark instances.}\label{tbl:overall_stats}
\end{table}

For the subset of 9 instances  for which all solver-model combinations could deliver optimality proofs,  Table~\ref{tbl:overall_stats_solved_opt_by_all} contains further information (where available for the given solver): the average number of nodes visited in the search process, the average runtime, the standard deviation of the number of visited nodes, and the standard deviation of the runtime. 
These 9 instances all consist of 10 jobs.

\begin{table}[ht]
  \setlength{\tabcolsep}{4pt}
  \centering
  \begin{tabular}{lllll}
  \toprule
solver            & avg nodes & avg rt & std nodes & std rt \\\midrule
gurobi-ilp        & 3.62E+02  & 1.7    & 5.51E+02  & 1.7    \\
gurobi-cp         & 8.25E+02  & 24.4   & 8.58E+02  & 38.5   \\
gurobi-cp-repr    & 7.19E+02  & 37.5   & 9.38E+02  & 47.4   \\
gurobi-ilp-ws     & 3.27E+02  & 1.7    & 4.34E+02  & 1.5    \\
gurobi-cp-ws      & 4.62E+02  & 12.4   & 5.58E+02  & 14.7   \\
gurobi-cp-repr-ws & 6.55E+02  & 22.4   & 9.76E+02  & 26.1   \\
cpopt-ilp         & 1.13E+07  & 571.7  & 1.29E+07  & 712    \\
cpopt-cp          & 2.93E+06  & 130.4  & 3.63E+06  & 172.3  \\
cpopt-cp-repr     & 3.44E+06  & 511.2  & 5.67E+06  & 861.3  \\
chuffed-ilp       & 3.71E+05  & 106.6  & 7.63E+05  & 186.1  \\
chuffed-cp        & 2.95E+05  & 64     & 5.45E+05  & 105.5  \\
chuffed-cp-repr   & 6.06E+05  & 24.2   & 1.28E+06  & 36.8   \\
ortools-ilp       & n/a       & 28.3   & n/a       & 64.5   \\
ortools-cp        & n/a       & 15     & n/a       & 22.2   \\
ortools-cp-repr   & n/a       & 42     & n/a       & 106.2  \\
opl-ilp           & 1.33E+06  & 81.3   & 1.66E+06  & 165.7  \\
opl-cp-repr       & 3.49E+04  & 1.3    & 4.02E+04  & 1.7   \\\bottomrule
\end{tabular}
    \caption{Proof times and number of visited nodes for the subset of instances for which all solvers could prove optimality.}\label{tbl:overall_stats_solved_opt_by_all}
\end{table}

\subsubsection{Finding solutions}
The OPL model with representative jobs (opl-cp-repr) finds solutions for all 80 instances.
Moreover, the warm-start approach for Gurobi (both mip-ws and cp-ws) finds solutions for all instances. 
Even without warm-start, ortools-ilp was capable of finding solutions for 78 instances (71 for ortools-cp), followed by cpopt-ilp (71 instances) and chuffed-ilp (70 instances).
For all solvers run via MiniZinc, both the ILP and the CP model lead to much better results than the CP model using representative batches.
However, when using CP Optimizer directly, significantly better results were achieved with opl-cp-repr compared with opl-ilp.

Regarding the quality of found solutions, best results were also achieved by opl-cp-repr (66 best results), followed by gurobi-ilp (53 best results) and gurobi-ilp-ws (52 best results). 
With cpopt-ilp and cpopt-cp 
as well as gurobi-cp-ws and gurobi-cp, best results could be achieved for roughly half of the instance set.
In comparison, Chuffed and OR-Tools produced fewer best instance results; best results could only be achieved for those instances that were solved optimally.
The results further indicate that concerning the ability of finding best solutions, Gurobi performs significantly better with the ILP-model. 
For CP Optimizer run via MiniZinc, the ILP-model also lead to the best results.
For Chuffed and  OR-Tools however, 
the results achieved with the ILP model are comparable with those achieved with one of the two CP models.

\subsubsection{Finding optimal solutions} Using all evaluated methods, optimal solutions could be found for 38 instances:  all instances with 10 jobs, 16 instances with 25 jobs, one with 50 jobs and one with 100 jobs. 

Most optimality proofs were provided by Gurobi and the ILP-model (34 proofs with warms-start, 31 without), followed by CP Optimizer run directly with the olp-repr model (28 proofs) and Gurobi with the CP models (between 23 and 26 proofs). 
Even though cpopt-cp-repr (cpopt-ilp, cpopt-cp)  could only provide 18 (13, 13) optimality proofs, it did find 29 (32, 30) optimal solutions; 
similarly chuffed-cp-repr (chuffed-cp, chuffed-ilp) provided only 20 (14, 14) optimality proofs but could find 23 (24, 24) optimal solutions.
For OR-Tools and the ILP-model (CP-model), optimality proofs could be delivered for all 20 (19) instances where the optimal solution was found.
It is noteworthy that for Chuffed and CP Optimizer most optimality proofs could be found when using the cp-repr model; for Gurobi more proofs could be found with cp-repr than with cp.

Figure~\ref{plot:proof_time_boxplot} takes a closer look at the 9 instances for which 
all evaluated methods provided provably optimal solutions within the runtime limit (instances 1-6, 9, 12 and 15) and compares the relative proof times for these instances. 
To calculate the relative proof time for an instance, we divide the absolute proof time (in seconds) by the overall best absolute proof time for that instance.
As Table~\ref{tbl:overall_stats_solved_opt_by_all} indicates, the average proof times for CP Optimizer used via MiniZinc exceed the average proof times of other solvers approximately by a factor of 10; we have therefore excluded the results for CP Optimizer run via MiniZinc from Figure~\ref{plot:proof_time_boxplot} to allow for better readability of the results. 
The distribution of relative proof times confirms what can already be observed Table~\ref{tbl:overall_stats_solved_opt_by_all}: fastest proofs are delivered by opl-cp-repr and by
gurobi-ilp (both with and without warm-start);
optimality proofs required the longest time for chuffed-cp and chuffed-ilp as well as opl-ilp.
Moreover, it can be noted that Gurobi and OR-Tools perform best with the ILP-model, whereas Chuffed and CP Optimizer run directly achieve best results with cp-repr respectively opl-cp-repr.
\begin{figure}
  \centering
  \includegraphics[width=.9\textwidth]{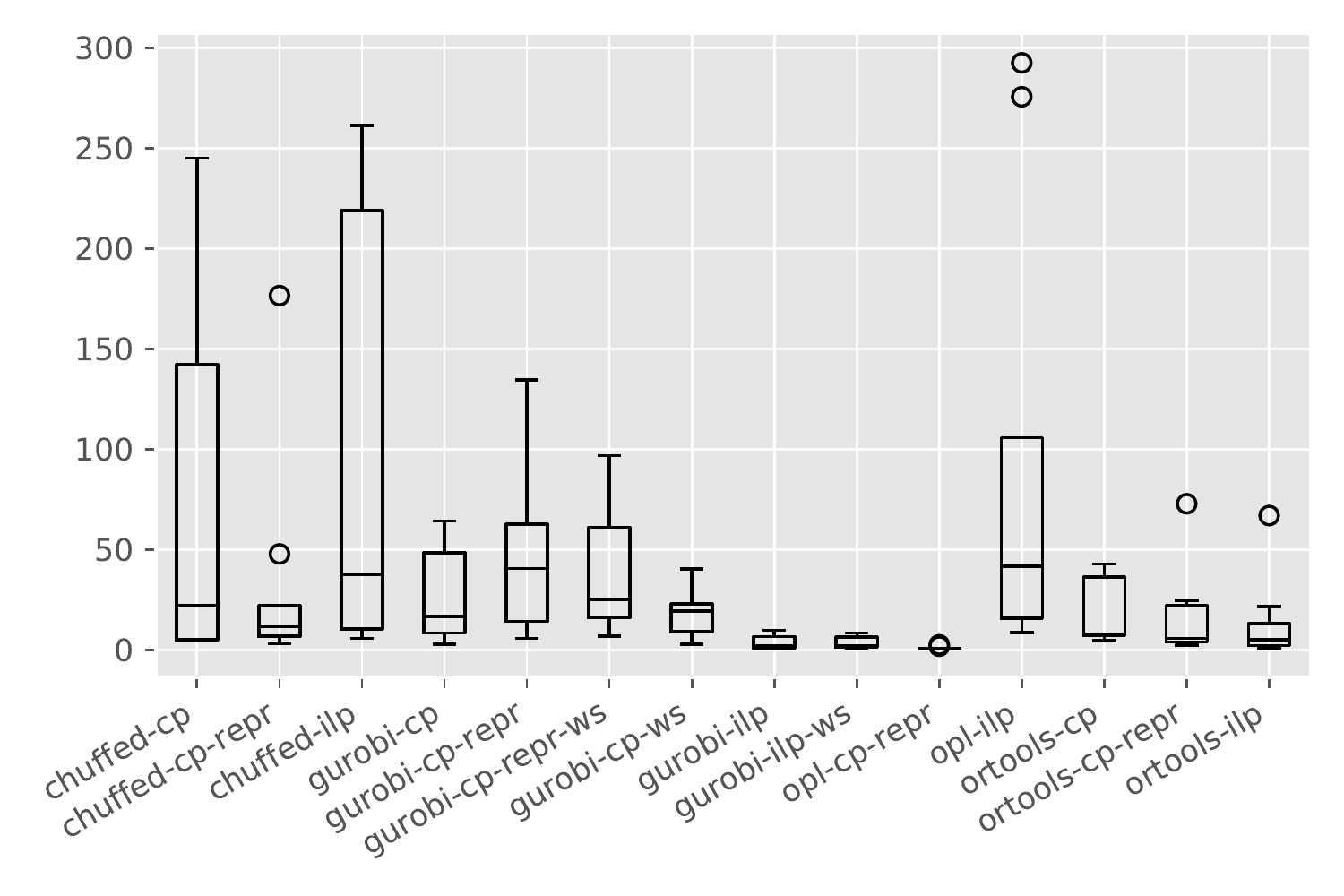}
  \captionof{figure}{Comparison of relative proof times for the subset of instances for which all methods could find optimality proofs.}\label{plot:proof_time_boxplot}
\end{figure}

\subsubsection{Search strategies.}
\label{sec:results-search}
The results of the comparison of search strategies for ortools-cp, ortools-cp-repr, and ortools-ilp can be found in Table~\ref{tbl:search_strategy_comparison_ortools}, those for chuffed-cp, chuffed-cp-repr and chuffed-ilp in Table~\ref{tbl:search_strategy_comparison_chuffed}. The first column in each row denotes the evaluated search strategy 
and the following columns contain the respective numbers of solved instances, best solution results and optimality proofs achieved by each search strategy in comparison to all other search strategies with the same solver and model.
\begin{table}[ht]
  \setlength{\tabcolsep}{4pt}
  \centering
\begin{tabular}{l|lll|lll|lll}\toprule
search        & \multicolumn{3}{c|}{ortools-cp} & \multicolumn{3}{c|}{ortools-cp-repr} & \multicolumn{3}{c}{ortools-ilp} \\
strategy        & solved    & best    & proof    & solved      & best      & proof     & solved     & best    & proof    \\\midrule
default & 38        & 28      & 20       & 4           & 4         & 4         & 40         & 35      & 20       \\
search1 & 37        & 28      & 20       & 29          & 18        & 11        & 43         & 37      & 20       \\
search2 & 71        & 28      & 18       & 31          & 11        & 7         & 78         & 55      & 20       \\
search3 & 71        & 34      & 19       & 55          & 32        & 16        & 78         & 58      & 20       \\
search4 & 71        & 37      & 19       & 39          & 9         & 7         & 78         & 54      & 20       \\
search5 & 70        & 29      & 18       & 33          & 13        & 9         & 77         & 57      & 20       \\
search6 & 70        & 33      & 19       & 56          & 38        & 18        & 78         & 52      & 20       \\
search7 & 71        & 37      & 19       & 41          & 15        & 11        & 78         & 57      & 20      \\\bottomrule
\end{tabular}
\caption{Comparison of 8 search strategies for CP- and ILP-models with OR-Tools}\label{tbl:search_strategy_comparison_ortools}
\end{table}

\begin{table}[ht]
  \setlength{\tabcolsep}{4pt}
  \centering
\begin{tabular}{l|lll|lll|lll}\toprule
search       & \multicolumn{3}{c|}{chuffed-cp} & \multicolumn{3}{c|}{chuffed-cp-repr} & \multicolumn{3}{c}{chuffed-ilp} \\
strategy        & solved    & best    & proof    & solved      & best      & proof     & solved     & best    & proof    \\\midrule
default & 44        & 20      & 8        & 20          & 14        & 14        & 41         & 18      & 9        \\
search1 & 65        & 28      & 14       & 23          & 22        & 20        & 62         & 23      & 14       \\
search2 & 66        & 29      & 14       & 26          & 23        & 20        & 65         & 27      & 14       \\
search3 & 66        & 31      & 14       & 28          & 26        & 20        & 65         & 29      & 14       \\
search4 & 65        & 31      & 14       & 26          & 22        & 20        & 70         & 33      & 14       \\
search5 & 65        & 31      & 14       & 26          & 23        & 20        & 64         & 31      & 14       \\
search6 & 67        & 38      & 14       & 27          & 25        & 21        & 63         & 31      & 14       \\
search7 & 66        & 32      & 14       & 26          & 22        & 20        & 68         & 36      & 14     \\\bottomrule
\end{tabular}
\caption{Comparison of 8 search strategies for CP- and ILP-models with Chuffed}\label{tbl:search_strategy_comparison_chuffed}
\end{table}

We can see that using search strategies could greatly improve the number of solved instances for all six compared methods.
The improvement was most significant for ortools-cp-repr, which could only solve 4 instances with the default strategy and 56 instances with search strategy 6.
Moreover, the number of instances solved could nearly by doubled for ortools-ilp when using search strategies 2 to 7 (40 with the default strategy and 78 with search strategies 2--4 and 6,7) and increased from 38 to 70 or 71 for ortools-cp when using one of the search strategies  1 to 7.
For Chuffed, all search strategies had a comparable impact on the number of solved instances.  
The number of best solutions found could be improved as well for all six methods; again, the most notable improvement was for ortools-cp-repr and search strategy 6 (4 vs. 38 best solutions). For chuffed-cp (chuffed-ilp) search strategy 6 (7) nearly allowed to  double the number of best solutions found from 20 to 38 (18 to 36).
For Chuffed, the number of provided optimality proofs could also greatly be improved using the suggested search strategies. 
For OR-Tools, the chosen search strategies do not necessarily have a positive impact on the number of optimality proof found. For ortools-cp, using search strategies 2--7 even lead to slightly fewer optimality https://www.overleaf.com/project/623b1a528f09888d1fbc5b52proofs. However, for ortools-cp-repr, large improvements could again be made using strategies 3 and 6. 

To summarize, for all methods except ortools-cp-repr, the 6 search strategies 2--7 had a comparable impact on the number of solved instances and on the solution quality. For Chuffed, improvements could also be achieved by using search strategy 1, which tries to assign the number of batches first and then continues with the solver's default search strategy.
For ortools-cp-repr, the differences between search strategies are greater: search strategies 3 and 6 allow for the greatest improvement.

Search strategies can also be used for Gurobi and are implemented with
branching priorities by the MiniZinc interface.
However, the experiments run for gurobi-ilp and gurobi-cp with all search strategies did not reveal significant differences between the search strategies for this solver.
For CP Optimizer run via MiniZinc, search strategies are currently not supported.

\subsubsection{Warm-start with Gurobi.}\label{warm-start}
The construction heuristic could find feasible solutions for all 80 instances within a few seconds. 
On average, the construction heuristic required 0.271 seconds per instance, 57 out of 80 instances could be solved in less than 0.1 seconds and the maximum time required per instance was 5.6 seconds.
Warm-starting gurobi-cp-ws and gurobi-ilp-ws with these initial solutions 
lead to solutions for all 80 instances, i.e, 34 more instances than with gurobi-cp and 16 more than with gurobi-ilp.
For gurobi-cp-repr-ws could however still only solve 39 instances (6 more than without the warm-start approach).\footnote{The warmstart data provided to Gurobi only contains values for a subset of the decision variables. The solver thus needs to complete the partial solution and, for gurobi-cp-repr-ws, fails to do so for 41 instances.}

The heuristic solution could be improved by gurobi-ilp-ws for  78 instances:
for instance number 8 the construction heuristic interestingly already finds an optimal solution and for instance number 73, no improvement could be made (the solution is however not optimal, as shown by a solution with much lower solutions cost found by oplrun-repr).
For 8 instances for which gurobi-ilp had previously found solutions, warm-starting could improve the solution quality. 
For 7 instances warm-starting however led to a lower solution quality.
Overall, the relative difference between the solution quality of gurobi-ilp and gurobi-ws was not very large: for a single instance, the relative difference was greater than 5\%.
Concerning optimality proofs, gurobi-ilp-ws was  slightly worse than gurobi-ilp (32 vs. 36 proofs).
Similar results were obtained with gurobi-cp-ws.
To sum up, the warm-start approach for Gurobi can mainly help to find solutions for instances that Gurobi could otherwise not solve; the impact on improving the solution quality is however limited.

\subsubsection{Upper and lower bounds on the solution cost}

\paragraph{Best lower bounds.}
Lower bounds (also referred to as dual bounds) on the solution cost of an optimal solution are provided by Gurobi, OR-Tools and CP Optimizer (both when called directly and via MiniZinc). 
Best lower bounds were found by CP-Optimier run via MiniZinc and the cp-repr approach (best bounds for 50 instances), followed by gurobi-ilp (36 instances).
We compare the best lower bounds with those obtained with the theoretical results obtained in Section~\ref{sec:lower-bounds}.
Note that the bounds obtained by Gurobi, OR-Tools and CP Optimizer where provided within the time limit of one hour.
For the calculation of the theoretical lower bounds from Section~\ref{sec:lower-bounds}, our straightforward implementation yielded results in less than 2 seconds for all instances in our benchmark set.

\begin{figure}[ht]
  \centering
  \includegraphics[width=\textwidth]{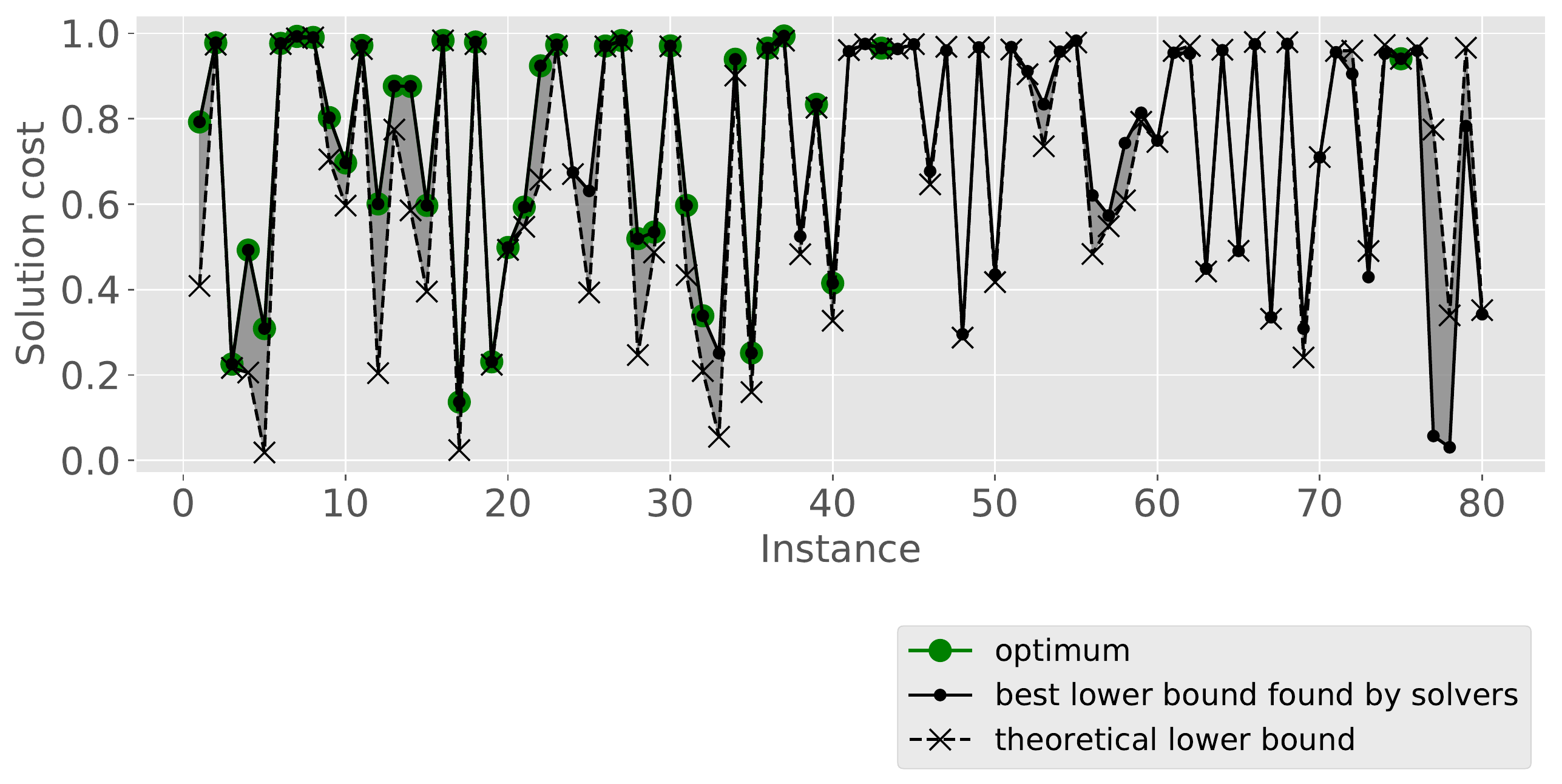}
  \caption{Best lower bounds found by solvers compared with the theoretical lower bounds.}\label{fig:best_lower_bounds_vs_calc_lower_bounds}
\end{figure}

As we can see in Figure~\ref{fig:best_lower_bounds_vs_calc_lower_bounds}, the
theoretical results produce bounds that are competitive with the lower bounds obtained by solvers during the process of solving the OSP.
The dashed line in this figure displays the theoretical lower bounds on the value of the objective function for every one of the benchmark instances.
The value of the aggregated objective function $\text{obj}$ as defined in equation~\eqref{eqn:obj-function}  is obtained by computing the lower bounds on the cumulative batch processing time via equation~\eqref{eqn:best-lower-bound-proc-time}, the bounds on the cumulative setup costs via equation~\eqref{eqn:lower-bound-setup-costs} and bounds on the number of tardy jobs as explained at the end of Section~\ref{sec:lower-bounds-obj-components}.
The solid line displays the overall best, i.e. maximal, lower bound found by Gurobi, OR-Tools or CP Optimizer when using any one of the evaluated models with any of the search strategies for every instance of the benchmark set.
For instances that could be solved optimally by some solver-model combination, the best lower bound coincides with the best upper bound and is additionally marked by a green circle.
A first observation is that the theoretical lower bounds are correlated with the best lower bounds obtained by the evaluated solvers. Moreover, the gap between these bounds, marked by the gray surfaces in Figure~\ref{fig:best_lower_bounds_vs_calc_lower_bounds}, is seldom very large.

This observation can be quantified by calculating the relative gap $g(i) = \vert \text{calc}(i) - \text{best}(i) \vert/\text{best}(i)$, where $\text{calc}(i)$ is the calculated lower bound and $\text{best}(i)$ the best lower bound found by solvers for instance $i$.
For 35 instances, the gap $g(i)$ is less than 1\% and for 50 instances it is less than 5\%.
Among the 45 instances for which the gap is larger than 1\%, the calculated bound is better than the best bound found by solvers, i.e. $g(i) > 0.01$, for 8 instances, all of which have 100 jobs.
In total, among the 20 instances with 100 jobs (instances 61--80), the calculated bounds were better than the best bounds found by some solver in 16 cases. 

\paragraph{Optimality gap.}

In order to assess the quality of solutions found for instances where the optimum was not found, we compute the optimality gap for every instance.
That is, if $s(i)$ is the objective value of the overall best solution found (i.e., the minimal solution cost found by any solver-model combination) and $b(i)$ is the best (i.e. maximal) lower bound found (either by some solver or by the theoretical lower bounds) for instance $i$, the optimality gap is given by $g(i) = (s(i)-b(i))/s(i)$.
Figure~\ref{fig:optimality_gap} visualizes the overall smallest optimality gap per instance. 
\begin{figure}[ht]
  \centering
  \includegraphics[width=\textwidth]{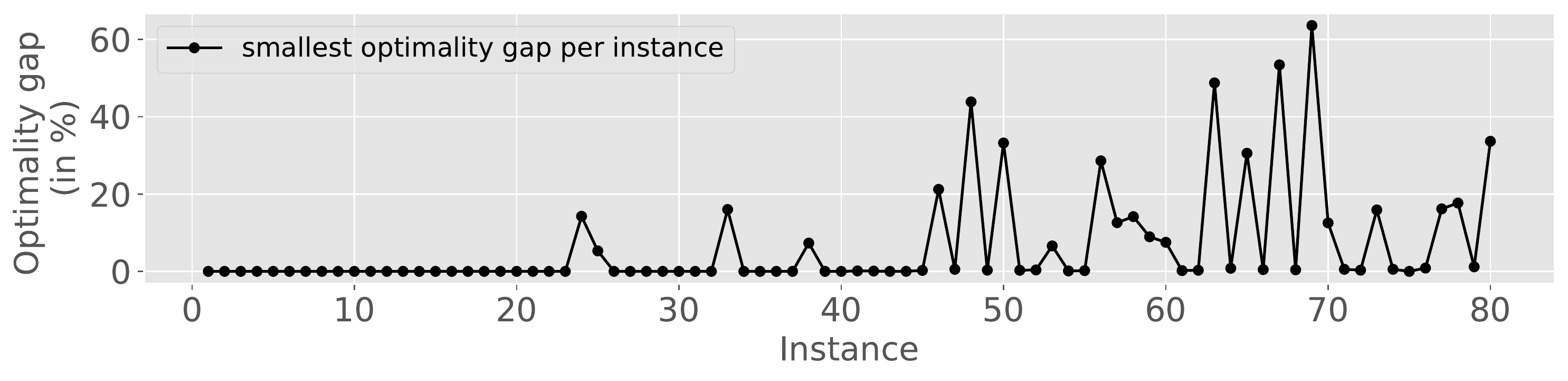}
  \caption{Overall smallest optimality gap per instance.}\label{fig:optimality_gap}
\end{figure}
The optimality gap generally increases with the number of jobs per instance: while the lower bounds are tight for all instances with 10 jobs and for most instances with 25 jobs (optimal solutions were found for 16 out of 20 instances with 25 jobs), this is no longer the case for instances with 50 or 100 jobs. 
However, the size of the optimality gap is not purely determined by the number of jobs;
there are 21 instances with 50 jobs or more for which the optimality gap is less than 1\% (2 of these instances could actually be solved optimally).
In total, the optimality gap is less than 1\% for 57 instances and smaller than  10\% for 63 instances out of 80.

\paragraph{Bounds on the number of batches and the components of the objective function.}

We take a closer look at the upper and lower bounds derived for the three components of the objective function as well as the number of batches used.
For the number of batches, the cumulative batch processing time,  the cumulative setup costs and the number of tardy jobs we compare the calculated lower bounds with the values for the overall best solution cost.\footnote{
Note that we cannot compare the overall best values with the overall best lower bounds, since the lower bounds derived by solvers cannot be broken down into components of the objective function.}
In order to make these values comparable across all instances, we calculated the following normalized measures for the lower and upper bounds for every instance:
\begin{itemize}
    \item The number of batches required per job: $b / n \in [1/n, 1]$,
    where $b$ is the total number of batches used.
    \item The batch processing time divided by the maximal batch processing time: $r = p/ (\sum_{j \in{J}} mint_j) \in (0,1]$, where $p$ is the cumulative batch processing time and $\sum_{j \in{J}} mint_j$ is the cumulative processing time if every job is processed in an individual batch. That is, the processing time achieved with the help of batching is equal to $r \cdot \sum_{j \in{J}} mint_j$; if no batching was done, $r=1$. The value of $r$ can thus be seen as the factor by which the processing time could be reduced with the help of batching.
    \item The setup costs divided by the maximal setup costs: $st / (n \cdot \max(1,max_{ST}))$, where  $max_{ST}$ is the maximal setup cost for this instance.
    \item The percentage of tardy jobs: $t/n$.
\end{itemize}

\begin{figure}
     \centering
     \begin{subfigure}[b]{0.9\textwidth}
         \centering
         \includegraphics[width=\textwidth]{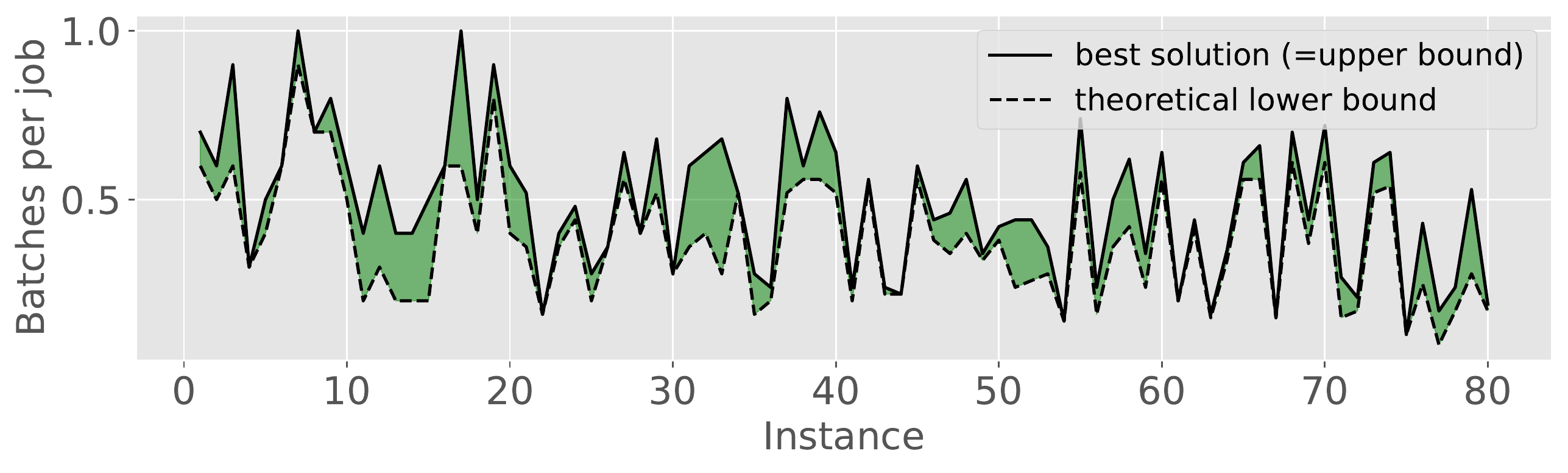}
         \caption{Bounds for the number of batches required per job}
         \label{fig:bounds-number-of-batches}
     \end{subfigure}
     \hfill
     \begin{subfigure}[b]{0.9\textwidth}
         \centering
         \includegraphics[width=\textwidth]{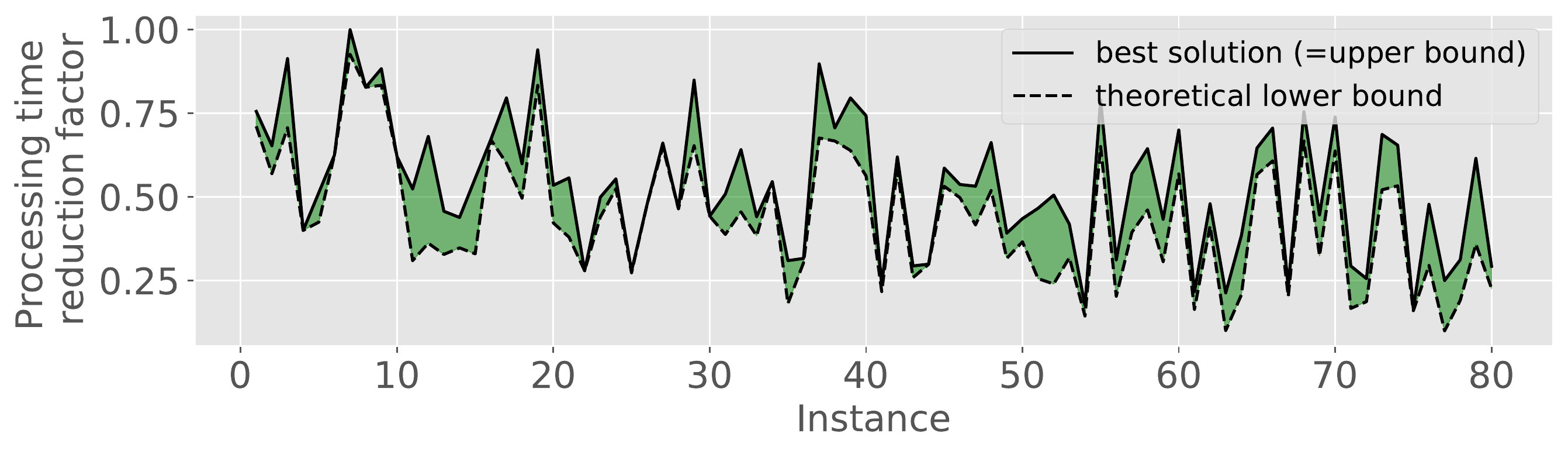}
         \caption{Bounds for the processing time reduction that is possible with batching}
         \label{fig:bounds-components_proc-time}
     \end{subfigure}
     \hfill
     \begin{subfigure}[b]{0.9\textwidth}
         \centering
         \includegraphics[width=\textwidth]{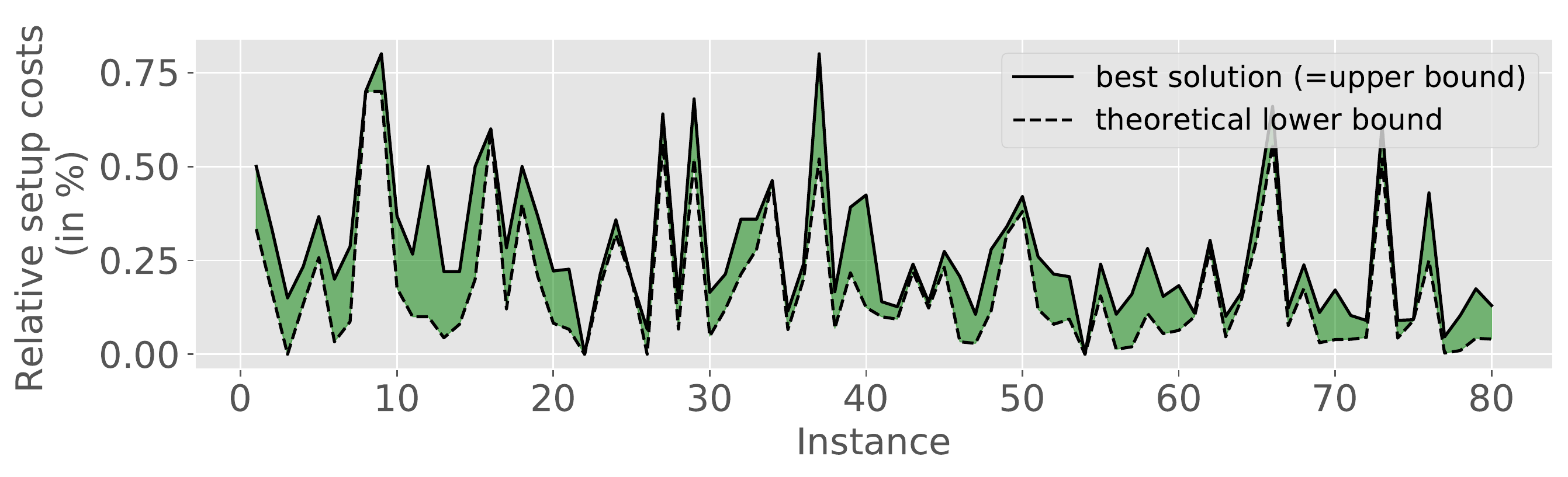}
         \caption{Bounds on the relative setup costs}
         \label{fig:bounds-components_setupcosts}
     \end{subfigure}
     \hfill
     \begin{subfigure}[b]{0.9\textwidth}
         \centering
         \includegraphics[width=\textwidth]{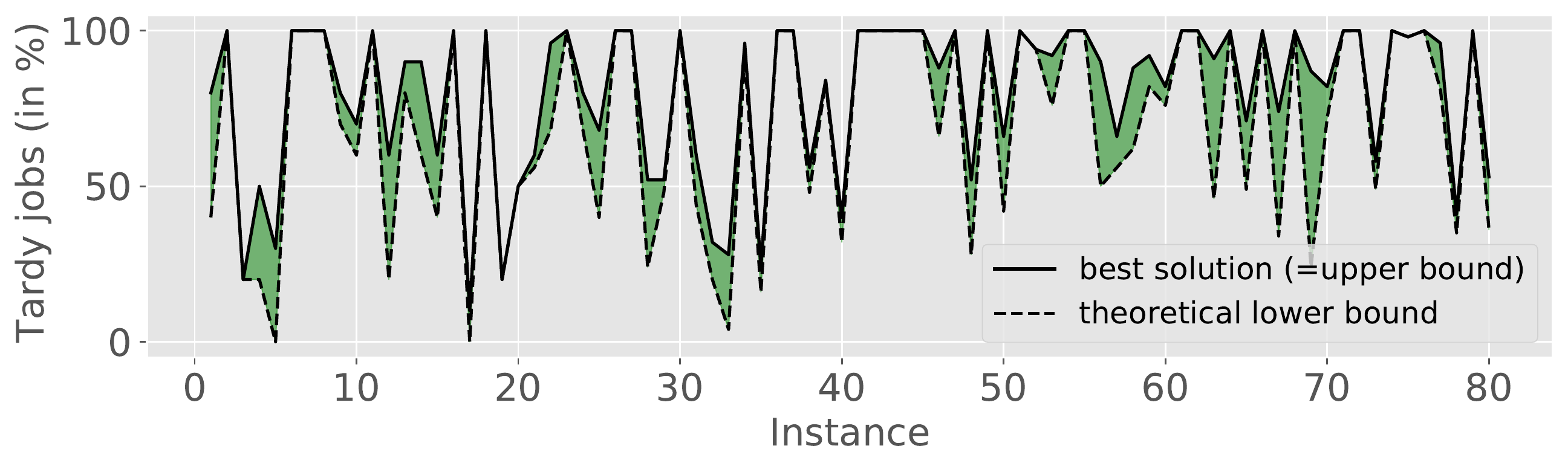}
         \caption{Bounds on the percentage of tardy jobs}
         \label{fig:bounds-components_tardy_jobs}
     \end{subfigure}
        \caption{Comparison of upper and lower bounds on the number of batches and the components of the objective function per instance.}
        \label{fig:bounds-components}
\end{figure}

The results are displayed in Figure~\ref{fig:bounds-components}: Figure~\ref{fig:bounds-number-of-batches} shows upper and lower bounds for the number of batches required, Figure~\ref{fig:bounds-components_proc-time} for the cumulative processing times, Figure~\ref{fig:bounds-components_setupcosts} for the cumulative setup costs and Figure~\ref{fig:bounds-components_tardy_jobs} for the tardiness of jobs.
These plots show that for all four measures, the calculated lower bounds are close to the best upper bounds for many instances. The calculation of the relative gap between upper and lower bounds allows to quantify this observation. The average gap and the standard deviation of gaps for our benchmark set are recorded in the table below:

\medskip
\begin{center}
    \begin{tabular}{lcc}
    \toprule
    & avg & std \\ \midrule
      number of batches   & $20.5\%$ &  $16.8\%$\\
      batch processing time  & $19.4\%$ & $14.8\%$\\
      setup costs & $43.9\%$ & $28.3\%$\\
      tardy jobs & $17.3\%$ & $24.3\%$ \\\bottomrule
    \end{tabular}
\end{center}
\medskip

The calculated lower bounds for setup costs leave the greatest room for improvement, as can be seen from the largest average and standard deviation. This is not surprising, since the bounds for setup costs derived in Section~\ref{sec:lower-bounds-obj-components} are quite simple in comparison with the refined lower bounds for the number of batches and the batch processing time presented in Section~\ref{sec:lower-bounds-batch-number}. 
Moreover, compared with the gaps for the number of batches and the batch processing times, the gap for the number of tardy jobs shows a higher standard deviation. This implies that even if the lower bounds are relatively good on average, they are quite far off for some instances. This can also be seen in Figure~\ref{fig:bounds-components_tardy_jobs}: upper and lower bounds coincide for 39 instances (explaining the low average gap) but are also very far apart for many instances, the gap being larger than 25\% for 23 instances.

\paragraph{Construction heuristic vs. calculated lower bounds.}

In analogy to the optimality gap per instance visualised in Figure~\ref{fig:optimality_gap}, we compute the relative gap between the calculated lower bound and the cost of the solution produced by the construction heuristic for every instance.
Note that if the construction heuristic is successful in scheduling all jobs, which it is for all instances of our benchmark set, the produced solution is always feasible and the obtained solution cost is thus an upper bound on the optimal solution cost.
The results are shown in Figure~\ref{fig:gap_greedy_calcLB}.
\begin{figure}[ht]
  \centering
  \includegraphics[width=\textwidth]{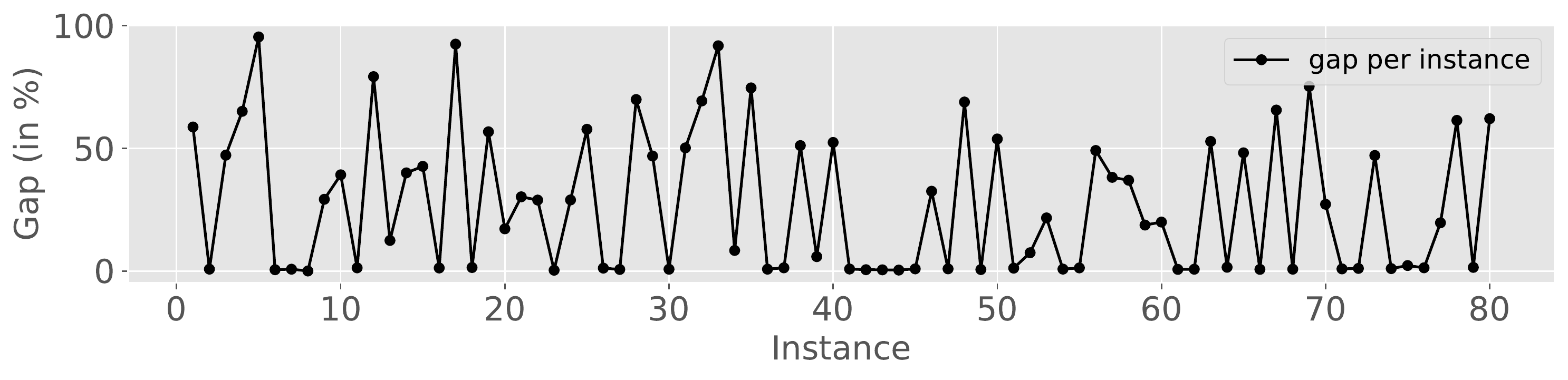}
  \caption{Gap between the upper bound found by the construction heuristic and the calculated lower bound per instance.}\label{fig:gap_greedy_calcLB}
\end{figure}
Obviously, the results are not as good as those for the optimality gap and the gap is nearly equal to 100\% for a few instances.
However, it is quite surprising to note that the gap is less than 1\% for 22 instances of all sizes, including 13 instances for which no solver could provably find an optimal solution. 
For a total of 37 instances, including 20 instances for which no solver could provably find an optimal solution, the gap is  less than 10\%. 
This means that, within a couple of seconds of computation time (the construction heuristic finds a solution for every instance within less than 6 seconds), we can find good estimates on the optimal solution cost for a quarter of the instance set and rough estimates for almost half of the instance set.

\section{Conclusion}
In this paper, we introduced and formally defined the Oven Scheduling Problem and provided instances for this problem, a new batch scheduling problem that appears in the electronic component industry. 
We proposed two different modelling approaches, presented corresponding CP- and ILP-models and investigated various search strategies. 

Using our models as well as a warm-start approach, we were able to find feasible solutions for all 80 benchmark instances. 
Provably optimal solutions could be found for nearly half of the instance set and for 57 of the 80 instances, the best optimality gap is less than 1\%. 
Overall, the best results could be achieved with the model using representative jobs (see Section~\ref{sec:model-repr-job}) and running CP Optimizer directly, followed by the ILP-model with warm-start and Gurobi (see Section~\ref{sec:mip-model}). OR-Tools with the ILP model was also capable of finding solutions for almost all instances.
Varying the search strategy had a major impact on the performance of the CP solvers Chuffed and OR-Tools.

Furthermore, we developed refined lower bounds on the number of batches required and the cumulative batch processing time in any feasible solution which allows to provide lower bounds on the optimal solution cost within a couple of seconds.
These theoretical lower bounds turn out to be competitive with the best lower bounds found by the evaluated solvers for the majority of the benchmark instances.
For large instances, the theoretical lower bounds are particularly good: for 16 of 20 instances with 100 jobs, the theoretical lower bounds are better than those found by any of the evaluated solvers within one hour.
Moreover, using these lower bounds together with the upper bounds on the optimal solution cost which can be obtained by using a simple construction heuristic, an interval for the optimum cost value can be computed in a few seconds. 
For 22 of the benchmark instances, including 13 instances for which no solver could provably find an optimal solution within one hour, the relative gap between these upper and lower bounds is less than 1\%.

To further improve the solution quality for large instances, we plan to develop meta-heuristic strategies based on local search or large neighborhood search.
Moreover, in order to explain which parameters cause instances to be hard, an in-depth instance space analysis could be conducted. This could also shed light onto which methods perform best on different parts of the instance space.

\subsection*{Acknowledgments}

The financial support by the Austrian Federal Ministry for Digital and Economic Affairs, the National Foundation for Research, Technology and Development and the Christian Doppler Research Association is gratefully acknowledged.

\bibliography{osp_journal}

\end{document}